\documentclass{article}

\usepackage{arxiv}

\usepackage[utf8]{inputenc} 
\usepackage[T1]{fontenc}    
\usepackage{hyperref}       
\usepackage{url}            
\usepackage{booktabs}       
\usepackage{amsfonts}       
\usepackage{nicefrac}       
\usepackage{microtype}      
\usepackage{lipsum}		    
\usepackage{graphicx}
\usepackage{natbib}
\usepackage{doi}
\usepackage{subcaption}
\graphicspath{ {./images/} }
\usepackage{enumitem}

\usepackage{amsmath,amssymb}

\usepackage{multirow}
\usepackage{xcolor}
\usepackage{adjustbox}
\usepackage{longtable}
\usepackage{pythonhighlight}
\usepackage{lscape}
\usepackage{algorithm}
\usepackage{algpseudocode}

\newcommand{\M}{\mathcal{M}}

\newcommand{\R}{\mathbb{R}}

\newcommand{\wfuzzy}{\left(w^L_{j}, w^R_{j}, \xi^L_{j}, \xi^R_{j}\right)_{L^{w}_{j}R^{w}_{j}}}

\newcommand{\fuzzyzero}{\tilde{\boldsymbol{0}}}
\newcommand{\fuzzyone}{\tilde{\boldsymbol{1}}}
\newcommand{\xmark}{\ding{55}}
\newcommand{\sign}{\hbox{sign}}

\usepackage{amsmath}
\usepackage{enumitem}
\usepackage{multirow}
\usepackage{xcolor}
\usepackage{adjustbox}
\usepackage{longtable}
\usepackage{pythonhighlight}
\usepackage{amsthm}

\usepackage{tabularx}
\usepackage{pifont}
\usepackage{multirow}
\usepackage{mathrsfs}

\usepackage{tikz}
\usepackage{pgfplots}
\pgfplotsset{width=\linewidth, compat=1.18}

\newtheorem{theorem}{Theorem}
\newtheorem{definition}{Definition}
\newtheorem{proposition}{Proposition}
\newtheorem{lemma}{Lemma}
\newtheorem{example}{Example}

\usepackage{algorithm}
\usepackage{algpseudocode}

\usepackage{amsthm}

\title{Unweighted ranking for value-based decision making with uncertainty}

\author{
\href{https://orcid.org/0000-0001-8332-0381}    {\includegraphics[scale=0.1]{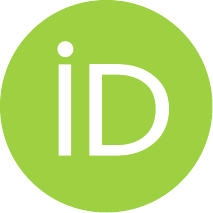}\hspace{1mm}
    Aar{\'o}n~L{\'o}pez-Garc{\'i}a}\\
	Departament de Matem{\'a}tiques per a l'Economia i l'Empresa\\ Facultat d'Economia\\
    Universitat de Val{\'e}ncia\\
	\texttt{aaron.lopez@uv.es}
	\And
\href{https://orcid.org/0000-0003-3728-1785}    {\includegraphics[scale=0.1]{orcid.pdf}\hspace{1mm}
    Natalia Criado}\\
	Valencian Research Institute for Artificial Intelligence\\ Universitat Polit{\`e}cnica de Val{\`e}ncia\\
	\texttt{ncriado@upv.es}\\
    \And
\href{https://orcid.org/0000-0002-6041-178X}    {\includegraphics[scale=0.1]{orcid.pdf}\hspace{1mm}
    Jose Such}\\
	Valencian Research Institute for Artificial Intelligence\\ Universitat Polit{\`e}cnica de Val{\`e}ncia\\
	\texttt{jsuch@upv.es}\\
}

\renewcommand{\shorttitle}{Unweighted ranking for value-based decision making with uncertainty}

\hypersetup{
    pdftitle={Unweighted ranking for value-based decision making with uncertainty},
    pdfsubject={cs.AI},
    pdfauthor={Aaron~Lopez-Garcia},
}

\begin{document}
\maketitle

\begin{abstract}
    As intelligent systems are increasingly implemented in our society to make autonomous decisions, their commitment to human values raises serious concerns. Their alignment with human values remains a critical challenge because it can jeopardise the integrity and security of citizens. For this reason, an innovative human-centred and values-driven approach to decision making is required. In this work, we introduce the Fuzzy-Unweighted Value-Based Decision Making (FUW-VBDM) framework, where agents incorporate both quantitative and qualitative criteria to generate human-centred decisions. We also address the normative bias introduced by stakeholders with arbitrary weights by removing prior weights and introducing a fuzzy domain of decision variables defined for a score function. This concept allows us to generalise any VBDM problem as the search for feasible solutions when optimising the score in the weight domain. To provide a solution to FUW-VBDM, we present Rankzzy, a customizable unweighted ranking method that integrates fuzzy-based reasoning to quantify uncertainty. We mathematically prove the consistency of the Rankzzy for any admissible configuration selected by stakeholders. We show the applicability of our method through an illustrative case study, which we also use as a running example. The evaluation conducted indicates a reduced computational cost in large-scale value-based decision-making problems and a strong rank performance regarding existing approaches when employing the aggregation via Pythagorean means.
\end{abstract}

\keywords{
Value-based decision-making \and Value-alignment \and Fuzzy optimisation \and Decision support systems \and Multi-criteria decision-making (MCDM)
}

\section{Introduction}\label{s:intro}

The incorporation of AI-based multi-agent systems into critical infrastructure to solve the challenges of advanced societies is already a fact. Its application and usefulness are indisputable, generating an unprecedented impact on the global world. However, its impact on society also raises ethical and moral concerns. This issue has been extensively addressed by key decision-makers in prestigious organisations such as the \cite{EU_AI_Guidelines} and the \cite{IEEE_AS}. Their discourse is increasingly focused on the commitment to accountability, fairness and transparency to ensure the safety-first principle of citizens. As AI systems play a key role in everyday decisions, their moral implications should not be overlooked \cite{LoPiano2020}. Hence, decision making requires a human-centred vision and a value-aligned commitment to be accepted by society without taking collateral risks.

Value-based decision-making (VBDM) is a very promising approach to address this conflictive scenario. Although it can be understood as a very complex task \cite{Ford1994,wittmer2019}, it is absolutely necessary in our society. When decision-makers plan to execute an action, their expert judgment has to be not only aligned with the application itself but also with the impact that such a decision will have on our society. The main problem does not only lie in assessing human values that affect the decision, but in the conflict of interests that arises when combining the knowledge-based and value-based criteria. For instance, a business plan focusing on social choice has to assess corporate actions considering the impact that such operations will have on both society and the company (for case studies see \cite{Antunes2006} or \cite{Christen2012}). The use of ethical and moral judgment within the decision process makes it even more complicated because there is no straightforward definition when referring to morality and ethics \cite{MacAskill2020, Astobiza2021, Beckstead2023}. Thus, multi-agent systems (\textbf{MAS}) must deal with the trade-offs between domain criteria and the ethical implications that their selection will entail for other alternatives \cite{Lera2022, LeraLeri2024}. To this end, there is a need to combine attributes of a \textbf{quantitative} and \textbf{qualitative} nature in decision models to ensure the proper integration of VBDM.

The problem of making decisions considering human values has been vastly studied and approached from many perspectives. A very successful solution for this problem is the use of ranking systems since they ascertain the preferences among the actions to perform \cite{Roth1992}. In this manner, we can establish a hierarchy by aggregating \cite{Barbera2004} or mapping \cite{Moretti2017} them to obtain individuals' priorities. However, in most cases, the approaches for addressing value-alignment problems lack consistency and scalability because they are designed for computing well-defined data structures. Moreover, the information and knowledge about the problem are usually attached to the presence of bias, uncertainty and non-objective criteria \cite{Reyna2009}.

On the one hand, the use of ethical and moral judgment within the decision process constitutes a serious concern \cite{MacAskill2020}. There is no straightforward definition when referring to morality and ethics \cite{Astobiza2021, Beckstead2023}. Vagueness is present in this decision scenario, and so we must account for it. For VBDM problems, not only are ethical perception and domain criteria subject to \textbf{uncertainty}, but also the environment and problem representation \cite{Payzan2013}, known as second-order uncertainty \cite{Bach2011}. The underlying uncertainty attached to values requires the use of fuzzy logic to model their degree of truth \cite{Zadeh1965, Ramik1985}. As such, subjectivity and non-accurate attributes can be described by means of fuzzy numbers \cite{Bojadziev1996} as well as the ambiguity of moral assessments  \cite{Kleijnen2001, Walstrom2006} to establish an \textbf{order} relationship. There exists a vast literature focused on the implementation of fuzzy logic in decision-making problems (see, for instance, \cite{Klir1988, Mardani2015, Kahraman2015, Xu2016}) and also on the study of uncertainty sources that may arise when applying decision aid techniques \cite{French1995}.

On the other hand, it is well-known that conflicts of interest are a significant concern in decision-making \cite{LopezGarcia2023}. How values are aggregated in a utility function can lead to biased systems \cite{Serramia2020}, thus compromising the credibility of the result obtained. A fundamental component is the weighting schemes. Their impact in decision-making is a determining factor when designing intelligent MAS since they convey partial relevance of the values \cite{Roy1996}. One of the widely assumed assumptions in this type of problem is that the weights can be set a priori as a fixed vector \cite{Triantaphyllou2000, Doumpos2013}. However, it has already been shown that this implies a direct bias that should not be neglected \cite{Jacquet1982, Liern2020}. To handle the inherent difficulty of weight elicitation, some authors have addressed this task through an \textbf{unweighed} approach that utilises decision intervals instead of single schemes \cite{Liern2020, LopezGarcia2023}. Thus, the result can be understood as the range of scores linked to each action. This offers us a more complete and informative overview of the actions and their alignment with the value system considered, which also eliminates the prior bias of fixed decision weights.

Considering all these gaps previously described, there is a need for a framework that supports the decision-making stage in determining the most appropriate solution that aligns with human values. Since the existing methods struggle to combine the quantitative and qualitative nature of values as well as to inherently approach the implicit uncertainty and biases, an end-to-end system that effectively integrates the existing limitations is required. For this purpose, this paper introduces the Fuzzy Unweighted Value-Based Decision-Making framework for dealing with VBDM problems utilising a MAS perspective. It is defined as a tuple composed of the groups of interests - i.e. the sets of agents, actions and values involved in the decision problem - and the definition of the mathematical concepts that lead to the outcome. In particular, a fuzzy order is defined to compare values, the formalisation of a decision matrix with quantitative and/or qualitative variables, the domain of weights to assess values and perform an unweighted approach, and a flexible aggregation function customizable.

Another contribution of this work is the introduction of Rankzzy, a new multi-agent decision-making method for addressing the problems posed in our framework. It is designed to combine both quantitative and qualitative data in an explainable and transparent manner. The values involved in the problem are modelled as LR-fuzzy numbers so that we can study their impact and relevance using partial-order relationships. Moreover, we have removed the use of prior weights. Instead, we take the notion of decision weighting space, in which we can find out how valuable the actions of the problem are. Accordingly, we provide the stakeholders with a less restrictive decision solution. Moreover, we have shown the mathematical properties of our decision system in terms of consistency and the existence of solutions in any decision scenario. Hence, Rankzzy is intentionally designed to cover a great part of the existing gaps in the literature, adding value to the VBDM methodology.

\section{Related work}\label{s:relatedwork}

Our work is related to the literature on Multi-criteria Decision Making (MCDM) and Value-based Decision Making (VBDM). In particular, we tackle the VBDM problem by proposing a novel MCDM framework. Table~\ref{t:literature} summarises the literature in both MCDM and VBDM and its relationship to our work. The columns of the table refer to the requirements introduced in the previous section. That is, we highlight in each column the contributions found in the literature related to whether they followed a multi-agent systems (MAS) approach, they used qualitative and/or quantitative variables, they modelled uncertainty by using fuzzy logic, the work required partial orders and/or preference systems, and the elimination of fixed weights. As we show, our work is the first one to meet all the requirements in a unified manner. Next, we give an overview of the most related MCDM and VBDM works.

\begin{table*}[t!]
\centering
\caption{Comparison of the approaches used in literature review methods for addressing value-based decision problems. The ``-'' mark indicates when a criterion is not applicable.}
\label{t:literature}
{
    \begin{tabularx}{\linewidth}{clcccccc}
    \toprule
    
    &
    Work
    &
    MAS
    &
    Qualitative
    &
    Quantitative
    &
    Uncertainty
    &
    Order
    $\preceq$ 
    &
    Unweighted
    \\

    \midrule

    \multirow{16}{*}{\rotatebox[origin=c]{90}{Multiple-Criteria Decision-Making}} 
    & \cite{Chen2000}  & \xmark & \xmark & \checkmark & \checkmark  & \checkmark & \xmark \\ 
    & \cite{Fu2008}  & \xmark & \checkmark & \checkmark & \checkmark  & \checkmark & \checkmark \\
    & \cite{Guh2008}  & \xmark & \xmark & \checkmark & \checkmark  & \checkmark & \xmark \\
    & \cite{Opricovic2011}  & \xmark & \xmark & \checkmark & \checkmark  & \checkmark & \xmark \\
    & \cite{REZAEI2015}  & \xmark & \xmark & \checkmark & \xmark  & - & \checkmark \\
    & \cite{Afsordegan2016}  & \xmark & \checkmark & \xmark & \xmark  & \checkmark & \xmark \\
    & \cite{Pei2019}  & \xmark & \checkmark & \xmark & \checkmark  & \checkmark & \xmark \\
    & \cite{Liern2020}  & \xmark & \xmark & \checkmark & \xmark  & - & \checkmark \\
    & \cite{HOSSEINIDEHSHIRI2022} & \checkmark & \xmark & \checkmark & \xmark  & - & \checkmark \\
    & \cite{Bouslah2023} & \checkmark & \xmark & \checkmark & \xmark  & - & \checkmark \\
    & \cite{LopezGarcia2023} & \xmark & \xmark & \checkmark & \xmark  & - & \checkmark \\
    & \cite{Hajiaghaei2023} & \checkmark & \checkmark & \checkmark & \checkmark  & \checkmark & \xmark \\
    & \cite{TAVANA2024} & \xmark & \xmark & \checkmark & \checkmark  & \checkmark & \checkmark \\
    & \cite{Goldani2024} & \checkmark & \xmark & \checkmark & \checkmark  & \checkmark & \checkmark \\
    & \cite{Bas2024} & \xmark & \checkmark & \checkmark & \xmark  & - & \checkmark \\
    \midrule
    
    \multirow{15}{*}{\rotatebox[origin=c]{90}{Value-Based Decision-Making}} 
    & \cite{Lopez2017} & \xmark & \xmark & \checkmark & \xmark  & \xmark & \xmark \\
    & \cite{Serramia2018} & \xmark & \xmark & \checkmark & \xmark  & \checkmark & \xmark \\
    & \cite{murukannaiah2020} & \checkmark & \checkmark & \checkmark & \xmark & - & \xmark \\ 
    & \cite{Ajmeri2020} & \xmark & \xmark & \checkmark & \xmark & - & \xmark \\ 
    & \cite{Serramia2020} & \xmark & \checkmark & \xmark & \xmark  & \checkmark & - \\
    & \cite{Mashayekhi2022} & \checkmark & \checkmark & \xmark & \xmark & - & \xmark \\ 
    & \cite{Yurrita2022} & \checkmark & \checkmark & \checkmark & \xmark & - & \xmark \\ 
    & \cite{Lera2022} & \checkmark & \xmark & \checkmark & \xmark & \xmark & \checkmark \\
    & \cite{Serramia2023} & \xmark & \checkmark & \xmark & \xmark  & \checkmark & \checkmark \\
    & \cite{Serramia2024} & \checkmark & \checkmark & \xmark & \xmark  & \checkmark & \checkmark \\
    & \cite{LeraLeri2024} & \checkmark & \checkmark & \checkmark & \xmark & \xmark & \checkmark \\
    & \cite{Arias2024} & \xmark & \checkmark & \checkmark & \xmark  & - & \xmark \\
    & \cite{Osman2024} & \checkmark & \checkmark & \xmark & \xmark  & - & \xmark \\
    & \cite{LopezGarcia2024} & \xmark & \checkmark & \checkmark & \xmark  & - & \checkmark \\
    & \cite{shen2025} & \xmark & \checkmark & \checkmark & \xmark  & - & \xmark \\
    
    \midrule
    
    & {Rankzzy} & \checkmark & \checkmark & \checkmark & \checkmark  & \checkmark & \checkmark \\
    
    \bottomrule
    \end{tabularx}
}
\end{table*}

\subsection{Decision analysis}

Real-life decision-making scenarios are very complex and ill-structured. For that reason, researchers have attached this major concern to create a robust methodology for handling biased information. On the one hand, fuzzy logic was designed to deal with data uncertainty during the decision process \cite{Zadeh1965, Mardani2015}. The aggregation operators have proven to be very useful for ranking alternatives \cite{Fu2008, Guh2008, Afsordegan2016, Pei2019}, as well as the extension of classical MCDM techniques such as TOPSIS \cite{Chen2000} or VIKOR \cite{Opricovic2011}. Likewise, fuzziness is also helpful when assessing relevance over criteria by using linguistic variables to capture qualitative insights \cite{Abbasbandy2009, Hajiaghaei2023}. On the other hand, unweighted strategies were designed to relax the normative bias attached to fixed weighting schemes \cite{REZAEI2015, Liern2020}. In most cases, the set of weights is determined by an expert panel, thus leading to subjective results \cite{Jacquet1982, Ouenniche2018, Watrobski2019}. Unweighted decision-making uses decision variables as weights to compute a mathematical programming problem that produces both minimum and maximum scores for each alternative. This approach can also be implemented for MAS or multi-staged information \cite{HOSSEINIDEHSHIRI2022, Bouslah2023}. Since the resulting optimisation problem may give rise to objections among stakeholders, some authors have generated solutions to extract decisional weights as a unified evaluation vector \cite{LopezGarcia2023}.

It is worth noting that fuzzy and unweighted techniques were designed to cover some of the main drawbacks of decision-making theory. Although some authors have addressed this problem \cite{Fu2008, TAVANA2024, Goldani2024}, the contributions of both theories are independent of each other. For that reason, one of the motivations of this work is to combine both techniques for approaching the value-based decision-making problem. In the same way, we can see that few works in the literature consider the incorporation of quantitative and qualitative variables simultaneously. It is therefore crucial to explore this gap to add value to the applicability of this field.

\subsection{Value-based decision-making}

The use of ranking systems is a common approach for solving value-based decision-making problems \cite{Roth1992}. It is assumed that the existence of preferences, either partially known by stakeholders or obtained by multi-objective optimisation, for the different actions involved in a given problem guides the outcome. From a value system, the main objective is to determine either hierarchical or partial order relationships that will establish the preferences among moral actions \cite{Lopez2017, Serramia2018, Serramia2020}. Within this value system, the ranking is usually induced by a combination of utility functions that consider the relation between norms and values \cite{Serramia2023, Serramia2024}, where sophisticated regression techniques have also been applied to aggregate multi-value systems \cite{Lera2022, LeraLeri2024}. Other authors also highlight the idea of structuring a value-aligned decision problem using decision matrices \cite{Ajmeri2020, LopezGarcia2024}. Although the order theory is an effective approach with straightforward interpretation, the main assumption that moral values can be qualitatively measured is rather controversial since evidence supports that people tend to rely on fuzzy intuition for moral reasoning \cite{Reyna2009}. Furthermore, the choice of norm relationships and ethical considerations may be subject to inconsistencies \cite{Serramia2018}, as well as the one-to-one value assessment \cite{LopezGarcia2024}. Therefore, there is a need for flexible methodologies that can model the underlying behaviour of human values and their actual impact on the final result.

In the same line as MCDM, the approaches have been conducted considering values as quantitative \cite{Lopez2017, Serramia2018, Ajmeri2020, Lera2022} or qualitative \cite{Serramia2020, Serramia2023} attributes. We can also find works in which the algorithms consider both kinds of variables \cite{murukannaiah2020, Mashayekhi2022, LopezGarcia2024}. Even though these techniques can be used to raise the problem, a framework that effectively combines both types of data in the same decision scenario is required.

Regarding the interpretation of values, the proposed approaches do not consider the subjective/vagueness attached to them. The main assumption that moral values can be formally measured is rather controversial since evidence supports that people tend to rely on fuzzy intuition for moral reasoning \cite{Reyna2009}. Furthermore, the choice of norm relationships and ethical considerations may be subject to inconsistencies \cite{Serramia2018}. Therefore, there is a need for flexible methodologies that can model the different notions of values and the comparisons among the distinct values in each problem. The first consideration can be addressed using fuzzy logic. For the second one, we just need an order relationship defined over the same fuzzy structure.

\section{Preliminaries}\label{s:preliminaries}

\subsection{Multiple-criteria decision-making}

Multiple-criteria decision-making (MCDM) is a major discipline of operations research composed of a set of algorithms and methodologies whose goal is to aid the decision-maker when there exist multiple conflicts of interest \cite{Hwang1981}. The major advantage of its implementation in applied decision-making problems is that it combines an experience-based background based on mathematical reasoning.

Real-life situations are too complex to be analysed by a single criterion \cite{Doumpos2013}. MCDM supports decision-makers in considering all the possible underlying factors involved in the problem.  The MCDM paradigm usually considers a decision matrix $X = [x_{ij}]_{ij}$ and a weighting scheme $w=(w_1,\dots,w_M)$ so that $i\in\{1,\dots, N\}$ and $j\in\{1,\dots, M\}$ indicates the number of alternatives and criteria respectively. Each component of the decision matrix indicates an attribute or performance value, while each weight indicates the relevance of each criterion during the decision process. By definition, the sum of weights is equal to one, thus showing the total relevance of the criteria within the decision scenario. Some MCDM techniques are defined over $L^p$ metric spaces to ascertain the demands of decision-makers \cite{Triantaphyllou2000}. The comparative strategies can be tuned with the aim of offering pessimistic or optimistic approaches that the multi-agents have to carry out.

Depending on the purpose of the decision-makers, the multi-agent system has to be designed with a particular multicriteria technique that meets their needs \cite{Doumpos2013}. For that reason, MCDM approaches exist to solve conflict-of-interest problems using different strategies.

The main advantage that offers this approach in VBDM problems is that we can set a preference system using a structure of actions $(A_1, \dots, A_N)$ evaluated by values $(V_1, \dots, V_M)$ through the assessment of $(Ag_{1}, \dots, Ag_{K})$ agents. The way we construct the decision matrix for this work is described in Section~\ref{s:decision_matrix}.

\subsection{Fuzzy logic}

When we refer to certainty, we assume the structures and parameters of a mathematical model to be definitely known and that there are no doubts about their values or occurrence \cite{Zimmermann2001}. However, when dealing with subjective criteria, such as ethics or values, we cannot ensure the reliability of numerical expressions. In order to address this major concern, \cite{Zadeh1965} developed fuzzy logic to model the uncertainty that involves real-world events mathematically. Its application in decision theory is based on the use of fuzzy sets instead of numerical values defined by membership functions. 

The conventional MCDM approaches tend to be less effective in dealing with the intrinsic uncertainty of decision theory \cite{Kahraman2003, Kahraman2006}. The assessment of subjective and qualitative events in nature is very difficult for the decision-maker since it is hard to express preferences using exact numerical values \cite{Doumpos2013}. In most cases, agents must deal with the imprecise or vague nature of the linguistic assessment given by decision-makers. For that reason, many intelligent systems use fuzzy sets to approximate the non-objective features that involve the decision. In this section, we define some basic fuzzy notions commonly used in fuzzy approaches.

\begin{definition}[Fuzzy set \cite{Zadeh1965}]\label{def:fuzzy_set}
Let $\mathscr{X}$ be a space of points or objects. A fuzzy set $A$ in $\mathscr{X}$ is characterized by a membership function $\mu_A$ which associates with each point $x$ in $\mathscr{X}$ a real number in the interval $[0,1]$, with the value of $\mu_A$ at $x$ representing the ``grade of membership'' of $x$ in $A$.
\end{definition}



\begin{definition}[LR-fuzzy number \cite{Dubois1978}]\label{def:lr_fuzzy}
An LR-fuzzy number $\tilde{X}$ is a 4 component array $\tilde{X} = (x^L, x^R, \alpha^L, \alpha^R)_{L,R}$ whose membership function has the following form:
\begin{equation}\label{eq:fuzzy_membership}
    \mu_{\tilde{X}}(t) = 
    \left \{
    \begin{array}{lcl}
    \displaystyle L\left(\frac{x^L - t}{\alpha^L}\right) & \mbox{if} & t < x^L, \\
    1                                                    & \mbox{if} & x^L \le r \le x^R, \\
    \displaystyle R\left(\frac{t - x^R}{\alpha^R}\right) & \mbox{if} & t > x^R, \\
    \end{array}
    \right.
\end{equation}
where $L, R: \R\to[0,1]$ are the so-called reference functions which are decreasing in the support of $\tilde{X}$ and upper semi-continuous with $L(0)=R(0)=1$. For every $\alpha\in[0,1]$, we define its $\alpha$-cut as $A_{\alpha}=\left\{x\in A : \mu_{A}(x)\ge\alpha \right\}$.
\end{definition}


\begin{definition}[Fuzzy arithmetic \cite{Dubois1978}]\label{def:fuzzy_operations}
    Given three LR-fuzzy numbers $\tilde{X}$, $\tilde{X}_1$ and $\tilde{X}_2$ with bounded support and positive components and a real number $\lambda$, we can define the following fuzzy operations:
    \begin{equation}\notag
    \begin{array}{rl}
        \tilde{X}_1 \oplus \tilde{X}_2  = &
        \displaystyle \left(x_1^L+x_2^L, x_1^R+x_2^R, \alpha_1^L+\alpha_2^L, \alpha_1^R+\alpha_2^R\right)_{L_1+L_2,R_1+R_2},\vspace{1mm} \\
        
        \tilde{X}_1 \otimes \tilde{X}_2  = &
        \displaystyle \left(x_1^L x_2^L,\ x_1^R x_2^R,\ \alpha_1^L \alpha_2^L,\ \alpha_1^R\alpha_2^R\right)_{L_1L_2,R_1R_2},\vspace{1mm} \\
    
        \lambda \odot \tilde{X}  = &
        \displaystyle \left(\lambda x^L, \lambda x^R, \lambda\alpha^L, \lambda\alpha^R\right)_{L,R},\vspace{1mm} \\
        
        \ominus\tilde{X}  = &
        \displaystyle \left(-x^R,\ -x^L,\ -\alpha^R,\ -\alpha^L\right)_{R,L},\vspace{1mm} \\
        
        1/\tilde{X} = & 
        \displaystyle \left(1/x^R,\ 1/x^L,\ 1/\alpha^R,\ 1/\alpha^L\right)_{R,L},\vspace{1mm} \\
        
        \tilde{X}_2 \oslash \tilde{X}_1  = &
        \displaystyle \tilde{X}_2 \otimes 1/\tilde{X}_1,
        \vspace{1mm} \\
        
        \tilde{X}^{\lambda} = & 
        \displaystyle \left((x^L)^{\lambda}, (x^R)^{\lambda}, (\alpha^L)^{\lambda}, (\alpha^R)^{\lambda}\right)_{L,R}. \\
    \end{array}
    \end{equation}
\end{definition}


We can also define some basic operators for their respective membership functions $\mu_{\tilde{X}_1}$ and $\mu_{\tilde{X}_2}$:
\[
    \begin{array}{lcll}
        \hbox{Intersection:} & \mu_{\tilde{X}_1\cap\tilde{X}_2}(t) & = &
        \min\left\{\mu_{\tilde{X}_1}(t_1), \mu_{\tilde{X}_1}(t_2)\right\}, \\
        
        \hbox{Union:}        & \mu_{\tilde{X}_1\cup\tilde{X}_2}(t) & = &
        \max\left\{\mu_{\tilde{X}_1}(t_1), \mu_{\tilde{X}_1}(t_2)\right\}, \\
        
        \hbox{Complement:}   & \mu_{\tilde{X}_1^{\mathbf{C}}}(t) & = & 1 - \mu_{\tilde{X}_1}(t). \\
    \end{array}
\]

\subsection{Fuzzy order relationships}\label{s:ranking}

Fuzzy order relations have been widely studied since Zadeh introduced the concept back in 1971. A major advantage of using fuzzy sets for modelling values is that we can capture their subjective nature. In most cases, values are described through linguistic variables or subjective preference scales rather than precise numerical parameters. However, it is worth mentioning that fuzzy numbers are not ordered per se. Then, in this section, we introduce the definitions and results that will lead us to the ranking of values and actions.

\begin{definition}[$\lesssim$ partial order relationship \cite{Dubois1978}]\label{def:fuzzy_order}
Let $\tilde{X}_1$ and $\tilde{X}_2$ be two LR-fuzzy numbers. Then,
\begin{equation}\label{eq:fuzzy_order}
    \tilde{X}_1\lesssim\tilde{X}_2 \Longleftrightarrow \tilde{X}_1\vee\tilde{X}_2 = \tilde{X}_2,
\end{equation}
so that the LR-fuzzy number $\tilde{X}_1\vee\tilde{X}_2$ has the following membership function:
\begin{equation}\label{eq:fuzzy_order2}
    \mu_{\tilde{X}_1\vee\tilde{X}_2}(t) = \sup_{t=t_1\vee t_2}\left\{ \mu_{\tilde{X}_1}(t_1)\wedge\mu_{\tilde{X}_1}(t_2) \right\}.
\end{equation}
\end{definition}

From (\ref{eq:fuzzy_order}) we can formulate fuzzy optimisation problems by using the $\lesssim$ order relationship \cite{Ramik1985}. The properties of this partial order relationship are as follows:

\begin{proposition}
   Given three arbitrary LR-fuzzy numbers $\tilde{X}_1,\tilde{X}_2,\tilde{X}_3\in\mathscr{X}$, then:
   \begin{enumerate}
       \item Reflexivity: $\tilde{X}_1 \lesssim \tilde{X}_1$.
       
       \item Transitivity: If $\tilde{X}_1 \lesssim \tilde{X}_2$ and $\tilde{X}_2 \lesssim \tilde{X}_3$, then $\tilde{X}_1 \lesssim \tilde{X}_3$.
       
       \item Antisymmetry: If $\tilde{X}_1 \lesssim \tilde{X}_2$ and $\tilde{X}_2 \lesssim \tilde{X}_1$, then $\tilde{X}_1 = \tilde{X}_2$.
   \end{enumerate}
\end{proposition}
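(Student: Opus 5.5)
The plan is to work entirely with the lattice-type operation $\vee$ of Definition~\ref{def:fuzzy_order}, obtained by the extension principle from the real maximum $t_1\vee t_2=\max\{t_1,t_2\}$. Throughout, I read the typo in (\ref{eq:fuzzy_order2}) as $\mu_{\tilde{X}_1}(t_1)\wedge\mu_{\tilde{X}_2}(t_2)$. The relation $\tilde{X}_1\lesssim\tilde{X}_2$ then means $\tilde{X}_1\vee\tilde{X}_2=\tilde{X}_2$. So I will first establish three algebraic properties of $\vee$ on LR-fuzzy numbers: idempotence, commutativity and associativity. Each of the three order properties then follows by a short formal argument, exactly as for the order induced by a semilattice.

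\emph{Reflexivity.} I need $\tilde{X}_1\vee\tilde{X}_1=\tilde{X}_1$, i.e.\ $\sup_{t=\max(t_1,t_2)}\min\{\mu(t_1),\mu(t_2)\}=\mu(t)$.
\begin{itemize}
\item The choice $t_1=t_2=t$ gives $\ge\mu(t)$.
\item For the reverse inequality, write $t=t_1$ with $t_2\le t$ (or symmetrically). If $t\le x^R$, then $\min\{\mu(t),\mu(t_2)\}\le\mu(t)$ trivially.
\item If $t>x^R$, the same trivial bound still applies, since the first argument is always $\mu(t)$.
\end{itemize}
Hence equality holds. Note that this step does not need the convexity of $\mu$ at all. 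A cleaner uniform route is via $\alpha$-cuts: for LR-fuzzy numbers with upper semicontinuous, monotone reference functions, the $\alpha$-cuts are closed intervals. The extension principle then gives $(\tilde{X}_1\vee\tilde{X}_2)_\alpha=[\max(a_1,a_2),\max(b_1,b_2)]$, where $[a_i,b_i]=(\tilde{X}_i)_\alpha$. I will prove this interval identity once, using upper semicontinuity and bounded support so that the supremum is attained. Everything then reduces to componentwise $\max$ on interval endpoints.

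\emph{Transitivity and antisymmetry.} Using the $\alpha$-cut description, $\tilde{X}_1\lesssim\tilde{X}_2$ is equivalent to $a_1(\alpha)\le a_2(\alpha)$ and $b_1(\alpha)\le b_2(\alpha)$ for all $\alpha\in(0,1]$. Two fuzzy numbers coincide iff all their $\alpha$-cuts coincide, by the representation $\mu(t)=\sup\{\alpha:t\in A_\alpha\}$.
\begin{itemize}
\item \emph{Transitivity} follows from the transitivity of $\le$ on the real endpoints, applied for each $\alpha$.
\item \emph{Antisymmetry:} the two inequalities force equal endpoints for every $\alpha$, hence equal $\alpha$-cuts and therefore $\tilde{X}_1=\tilde{X}_2$.
\end{itemize}
Equivalently, one can argue purely algebraically. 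Transitivity is $\tilde{X}_1\vee\tilde{X}_3=\tilde{X}_1\vee(\tilde{X}_2\vee\tilde{X}_3)=(\tilde{X}_1\vee\tilde{X}_2)\vee\tilde{X}_3=\tilde{X}_2\vee\tilde{X}_3=\tilde{X}_3$, using associativity. Antisymmetry is $\tilde{X}_1=\tilde{X}_2\vee\tilde{X}_1=\tilde{X}_1\vee\tilde{X}_2=\tilde{X}_2$, using commutativity. I will present whichever is shorter, likely the $\alpha$-cut version since it also gives associativity for free.

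The main obstacle is the $\alpha$-cut identity $(\tilde{X}_1\vee\tilde{X}_2)_\alpha=[\max(a_1,a_2),\max(b_1,b_2)]$. The inclusion "$\supseteq$" requires exhibiting, for each $t$ in the interval, a pair $(t_1,t_2)$ with $\max(t_1,t_2)=t$ and both memberships $\ge\alpha$. I would take $t_1=t$ when $t\in[a_1,b_1]$ and $t_2=a_2$, and similarly for the other cases, relying on the cuts being intervals. The inclusion "$\subseteq$" requires the supremum in (\ref{eq:fuzzy_order2}) to be attained, so that a value $\ge\alpha$ yields actual points in the cuts. This is where upper semicontinuity and the bounded-support assumption of Definition~\ref{def:lr_fuzzy} must be invoked, via a compactness argument on $\{(t_1,t_2):\max(t_1,t_2)=t\}\cap(\mathrm{supp}\,\tilde{X}_1\times\mathrm{supp}\,\tilde{X}_2)$.
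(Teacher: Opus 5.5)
Your proof is correct, and it supplies something the paper does not contain. The paper gives no proof of this proposition: it presents it as a known property of Dubois--Prade's extended maximum. It then proves the $\alpha$-cut characterization separately in Lemma~\ref{lemma:fuzzy_order_alpha_cut}, and the ``$\Leftarrow$'' direction there is simply asserted.

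Your $\alpha$-cut route amounts to proving that lemma properly first, through the identity $(\tilde{X}_1\vee\tilde{X}_2)_\alpha=[\max(a_1,a_2),\max(b_1,b_2)]$, and then reading the proposition off as a corollary. This buys the concrete endpoint description the paper later relies on in Lemma~\ref{lemma:fuzzy_order_distance}. It costs regularity (monotone, upper semicontinuous reference functions), and it requires restricting to $\alpha\in(0,1]$. You rightly make that restriction, since with the paper's definition the $0$-cut is all of $\R$.

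Your algebraic route is cheaper and more general, and I would present it instead of the $\alpha$-cut version. You have already shown idempotence with no hypotheses, and commutativity is immediate. Associativity follows from $\min\{\sup_i x_i,c\}=\sup_i\min\{x_i,c\}$ in $[0,1]$, which gives $\big((\tilde{X}_1\vee\tilde{X}_2)\vee\tilde{X}_3\big)(t)=\sup_{\max\{t_1,t_2,t_3\}=t}\min\{\mu_{\tilde{X}_1}(t_1),\mu_{\tilde{X}_2}(t_2),\mu_{\tilde{X}_3}(t_3)\}$. That expression is symmetric in the three arguments. Hence $\lesssim$ is the order of an idempotent commutative semigroup, i.e.\ a semilattice, on arbitrary fuzzy subsets of $\R$. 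No attainment or compactness argument is needed anywhere.

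If you keep the $\alpha$-cut version, note that bounded support is not part of Definition~\ref{def:lr_fuzzy}; it only appears in Definition~\ref{def:fuzzy_operations}. You also do not need it. Splitting $\{\max\{t_1,t_2\}=t\}$ into the cases $t_1=t$ and $t_2=t$ gives $\mu_{\tilde{X}_1\vee\tilde{X}_2}(t)=\max\{\mu_{\tilde{X}_1}(t)\wedge F_2(t),\,F_1(t)\wedge\mu_{\tilde{X}_2}(t)\}$ with $F_k(t)=\sup_{s\le t}\mu_{\tilde{X}_k}(s)$. Since $\mu_{\tilde{X}_k}$ is nondecreasing to the left of its core, this supremum is attained: at $s=t$ if $t<x_k^L$, and at $s=x_k^L$ otherwise.
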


\begin{lemma}\label{lemma:fuzzy_order_alpha_cut}
Given two arbitrary LR-fuzzy numbers $\tilde{X}_1,\tilde{X}_2\in\mathscr{X}$. Then, $\tilde{X}_1\lesssim\tilde{X}_2$ if, and only if, for each $\alpha\in[0,1]$:
\begin{equation}\label{eq:fuzzy_order_alpha_cut}
    \begin{array}{c}
        \inf\{\tilde{X}_{1,\alpha}\} \le \inf\{\tilde{X}_{2,\alpha}\}, \vspace{1mm}\\
        \sup\{\tilde{X}_{1,\alpha}\} \le \sup\{\tilde{X}_{2,\alpha}\}.
    \end{array}
\end{equation}

\end{lemma}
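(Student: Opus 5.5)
The plan is to compute the $\alpha$-cuts of $\tilde{X}_1\vee\tilde{X}_2$ explicitly in terms of those of $\tilde{X}_1$ and $\tilde{X}_2$, and then use the fact that a fuzzy number is determined by its family of $\alpha$-cuts. For an LR-fuzzy number as in Definition~\ref{def:lr_fuzzy}, each $\alpha$-cut with $\alpha\in(0,1]$ is a closed bounded interval $\tilde{X}_{k,\alpha}=[a_k(\alpha),b_k(\alpha)]$. Closedness follows from upper semicontinuity and monotonicity of $L,R$. At $\alpha=0$ we read the cut as the closure of the support, so that it is also an interval. The first step is to show that
\[
(\tilde{X}_1\vee\tilde{X}_2)_\alpha=\{t_1\vee t_2 : t_1\in\tilde{X}_{1,\alpha},\ t_2\in\tilde{X}_{2,\alpha}\}=[a_1(\alpha)\vee a_2(\alpha),\ b_1(\alpha)\vee b_2(\alpha)].
\]
Here the membership in (\ref{eq:fuzzy_order2}) is read with $\mu_{\tilde{X}_2}(t_2)$ in the second slot, which is clearly intended.

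The inclusion ``$\supseteq$'' of the first equality is immediate from the definition of the supremum. For ``$\subseteq$'' we need the supremum in (\ref{eq:fuzzy_order2}) to be attained. The constraint set $\{(t_1,t_2): t_1\vee t_2=t\}$ is closed, the function $\min\{\mu_{\tilde{X}_1}(t_1),\mu_{\tilde{X}_2}(t_2)\}$ is upper semicontinuous, and the relevant region is bounded because the $\alpha$-cuts are compact for $\alpha>0$. So the sup is a max, and $\mu_{\tilde{X}_1\vee\tilde{X}_2}(t)\ge\alpha$ gives $t_1,t_2$ in the respective cuts with $t=t_1\vee t_2$. This is the step I expect to be the main technical obstacle, namely the extension principle for $\vee$ on $\alpha$-cuts. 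It is the standard Nguyen-type argument, and the case $\alpha=0$ is handled by closure and a limit argument.

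The second equality is elementary. The map $(t_1,t_2)\mapsto t_1\vee t_2$ is continuous and the product of two intervals is connected, so the image is an interval. Its minimum is $a_1\vee a_2$, attained at $(a_1,a_2)$, and its maximum is $b_1\vee b_2$.

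With this formula both directions are short. If $\tilde{X}_1\lesssim\tilde{X}_2$, then $\tilde{X}_1\vee\tilde{X}_2=\tilde{X}_2$, so their $\alpha$-cuts coincide for every $\alpha$. Hence $a_1\vee a_2=a_2$ and $b_1\vee b_2=b_2$, that is $a_1(\alpha)\le a_2(\alpha)$ and $b_1(\alpha)\le b_2(\alpha)$, which is exactly (\ref{eq:fuzzy_order_alpha_cut}).

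Conversely, if (\ref{eq:fuzzy_order_alpha_cut}) holds for all $\alpha$, the formula gives $(\tilde{X}_1\vee\tilde{X}_2)_\alpha=\tilde{X}_{2,\alpha}$ for every $\alpha$. We then apply the representation $\mu(t)=\sup\{\alpha : t\in A_\alpha\}$, valid for any fuzzy set, to conclude that the memberships coincide. This gives $\tilde{X}_1\vee\tilde{X}_2=\tilde{X}_2$, i.e.\ $\tilde{X}_1\lesssim\tilde{X}_2$ by Definition~\ref{def:fuzzy_order}.
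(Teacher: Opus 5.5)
Your proposal is correct and follows the same route as the paper: both directions come from comparing the endpoints of the $\alpha$-cuts of $\tilde{X}_1\vee\tilde{X}_2$ with those of $\tilde{X}_2$. The difference is one of rigour. The paper only asserts $\inf\{Z_{\alpha}\}=\inf\{\tilde{X}_{2,\alpha}\}\ge\inf\{\tilde{X}_{1,\alpha}\}$, and argues the converse by an informal ``domination from both sides''. You instead prove the cut formula $(\tilde{X}_1\vee\tilde{X}_2)_\alpha=[a_1(\alpha)\vee a_2(\alpha),\,b_1(\alpha)\vee b_2(\alpha)]$ and recover the membership function from its cuts, which are exactly the two facts the paper's argument silently relies on.
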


\begin{proof}
$(\Rightarrow)$ We define $\tilde{Z} = \tilde{X}_1\vee\tilde{X}_2$. Then, according to (\ref{eq:fuzzy_order}) and (\ref{eq:fuzzy_order2}), for each $\alpha\in[0,1]$ it is hold:
\begin{equation}
    \begin{array}{c}
       \displaystyle\inf\{Z_{\alpha}\} =
       \inf\left\{x\in\mathscr{X}: \sup_{x=t_1\vee t_2}\left\{ \mu_{\tilde{X}_1}(t_1)\wedge\mu_{\tilde{X}_1}(t_2) \right\} \right\}, \vspace{1.5mm}\\
       
       \displaystyle\sup\{Z_{\alpha}\} =
       \sup\left\{x\in\mathscr{X}: \sup_{x=t_1\vee t_2}\left\{ \mu_{\tilde{X}_1}(t_1)\wedge\mu_{\tilde{X}_1}(t_2) \right\} \right\}.
    \end{array}
\end{equation}

Therefore, we have:
\begin{equation}
    \begin{array}{c}
       \inf\{Z_{\alpha}\} = \inf\{\tilde{X}_{2,\alpha}\} \ge \inf\{\tilde{X}_{1,\alpha}\},\vspace{1mm}\\
       \sup\{Z_{\alpha}\} = \sup\{\tilde{X}_{2,\alpha}\} \ge \sup\{\tilde{X}_{1,\alpha}\}.
    \end{array}
\end{equation}

$(\Leftarrow)$ If both equation of (\ref{eq:fuzzy_order_alpha_cut}) are verified, then we can say that $\tilde{X}_2$ dominates $\tilde{X}_1$ from both the left and the right side. Hence, we note that $\tilde{X}_1\lesssim\tilde{X}_2$.
\end{proof}

Even though $\lesssim$ is well-defined in mathematical terms, this order relationship can lead to indetermination. For that reason, it is more convenient to use the following definition.

\begin{definition}[Vertex method \cite{Chen2000}]
Let $\tilde{X}_1$ and $\tilde{X}_2$ be two LR-fuzzy numbers. Then, the vertex method between $\tilde{X}_1$ and $\tilde{X}_2$ is the following distance:
\begin{equation}\label{eq:vertex_method}
    d(\tilde{X}_1, \tilde{X}_2) = \big|\big| \tilde{X}_1 \ominus \tilde{X}_2 \big|\big|_2,
\end{equation}
so that the Euclidean distance is calculated over the vertices of the $0$ and $1$ $\alpha$-cuts of the fuzzy numbers, i.e., the kernel and core of their respective membership functions.
\end{definition}

\begin{definition}[Vertex order relationship $\preceq$]
The fuzzy order relationship $\preceq$ is a preorder relationship induced by the vertex method over the fuzzy $\fuzzyzero$.
\end{definition}

Hence, given two LR-fuzzy numbers $\tilde{X}_1$ and $\tilde{X}_2$, we say:
\begin{equation}
    \tilde{X}_1 \preceq \tilde{X}_2 \Longleftrightarrow \left|\left|\tilde{X}_1\right|\right|_2 \le \left|\left|\tilde{X}_2\right|\right|_2,
\end{equation}
thus taking into account the four vertices that determine the LR-fuzzy structure.

\begin{lemma}\label{lemma:fuzzy_order_distance}
Given two arbitrary LR-fuzzy numbers $\tilde{X}_1,\tilde{X}_2\in\mathscr{X}$ so that $x_1^L - \alpha_1^L$, $x_2^L - \alpha_2^L > 0$. Then:
\begin{equation}\label{eq:fuzzy_order_distance}
    \tilde{X}_1\lesssim\tilde{X}_2 \Longrightarrow \tilde{X}_1\preceq\tilde{X}_2.
\end{equation}
\end{lemma}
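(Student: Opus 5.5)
We will prove the implication by translating $\lesssim$ into inequalities between the four vertices of the two fuzzy numbers, and then comparing the Euclidean norms vertex by vertex. The positivity assumption $x_k^L-\alpha_k^L>0$ is what makes the vertexwise comparison carry over to the squared coordinates. We read $\|\tilde{X}\|_2$, as the vertex method indicates, as the Euclidean norm of the vector of vertices of the $0$- and $1$-cuts:
\[
v(\tilde{X}) = \left(x^L-\alpha^L,\; x^L,\; x^R,\; x^R+\alpha^R\right).
\]
This is the distance to $\fuzzyzero$.

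\textbf{Step 1.} Apply Lemma~\ref{lemma:fuzzy_order_alpha_cut} at the two levels $\alpha=1$ and $\alpha=0$. Here the $0$-cut is understood as the closure of the support, $[x^L-\alpha^L,\,x^R+\alpha^R]$, which is the natural convention for the vertex method. At $\alpha=1$ the cut is the core $[x_k^L,x_k^R]$, so $\tilde{X}_1\lesssim\tilde{X}_2$ yields $x_1^L\le x_2^L$ and $x_1^R\le x_2^R$. At $\alpha=0$ the cut is $[x_k^L-\alpha_k^L,\,x_k^R+\alpha_k^R]$, which yields $x_1^L-\alpha_1^L\le x_2^L-\alpha_2^L$ and $x_1^R+\alpha_1^R\le x_2^R+\alpha_2^R$. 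Hence $v(\tilde{X}_1)\le v(\tilde{X}_2)$ componentwise.

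\textbf{Step 2.} Use the hypothesis $x_1^L-\alpha_1^L>0$. Since $\alpha_k^L,\alpha_k^R\ge 0$ by the definition of spreads, every component of $v(\tilde{X}_1)$ is at least its smallest vertex $x_1^L-\alpha_1^L$. So all components of $v(\tilde{X}_1)$ are strictly positive, and by Step 1 so are those of $v(\tilde{X}_2)$. On $(0,\infty)$ the map $t\mapsto t^2$ is increasing, so each squared component of $v(\tilde{X}_1)$ is at most the corresponding squared component of $v(\tilde{X}_2)$. Summing the four inequalities and taking square roots gives $\|\tilde{X}_1\|_2\le\|\tilde{X}_2\|_2$, which is $\tilde{X}_1\preceq\tilde{X}_2$.

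\textbf{Main obstacle.} The delicate point is Step 1 at $\alpha=0$. With Definition~\ref{def:lr_fuzzy} read literally, the $0$-cut of any fuzzy number is the whole space, so Lemma~\ref{lemma:fuzzy_order_alpha_cut} gives no information there. We will therefore use the support-closure convention for the $0$-cut. Alternatively, we can take limits $\alpha\to0^+$ of the inequalities $\inf\tilde{X}_{1,\alpha}\le\inf\tilde{X}_{2,\alpha}$ and $\sup\tilde{X}_{1,\alpha}\le\sup\tilde{X}_{2,\alpha}$; for bounded-support LR numbers these endpoints converge to the support endpoints. We should also note the role of the positivity hypothesis: without it, a vertex could grow from a negative value toward zero and its square would decrease, which would break Step 2.
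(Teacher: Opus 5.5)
Your proof is correct and follows the same route as the paper: apply Lemma~\ref{lemma:fuzzy_order_alpha_cut} at $\alpha\in\{0,1\}$ to get componentwise inequalities between the vertex vectors $(x_k^L-\alpha_k^L,\,x_k^L,\,x_k^R,\,x_k^R+\alpha_k^R)$, then compare their Euclidean norms. You also make explicit two points the paper leaves tacit: reading the $0$-cut as the closure of the support, and using positivity of the smallest vertex to make squaring monotone, which is where the hypothesis is actually needed for the norm comparison.
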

\begin{proof}
    According to Lemma~\ref{lemma:fuzzy_order_alpha_cut}, if $\tilde{X}_1\preceq\tilde{X}_2$ we know that:
    \begin{equation}
    \left\{
    \begin{array}{c}
        \inf\{\tilde{X}_{1,\alpha}\} \le \inf\{\tilde{X}_{2,\alpha}\}, \vspace{1mm}\\
        \sup\{\tilde{X}_{1,\alpha}\} \le \sup\{\tilde{X}_{2,\alpha}\}.
    \end{array}
    \right.
\end{equation}
Then, for the cases when $\alpha\in\{0,1\}$ it is hold that:
    \begin{equation}
    \left\{
    \begin{array}{ccc}
        x_1^L \le x_2^L & \hbox{and} & x_1^L - \alpha_1^L \le x_2^L - \alpha_2^L, \vspace{1mm}\\
        x_1^R \le x_2^R & \hbox{and} & x_1^L + \alpha_1^R \le x_2^R + \alpha_2^R.
    \end{array}
    \right.
\end{equation}
Therefore, when we compute the $L^2$ distance, we obtain:
\begin{equation}
\begin{array}{rcl}
\left|\left|\tilde{X}_1\right|\right|_2 = &
\left|\left|\left(x_1^L - \alpha_1^L, x_1^L, x_1^R, x_1^R + \alpha_1^R \right)\right|\right|_2 & \le \vspace{1mm} \\
                \le & \left|\left|\left(x_2^L - \alpha_2^L, x_2^L, x_2^R, x_2^R + \alpha_2^R \right)\right|\right|_2 & = \left|\left|\tilde{X}_2\right|\right|_2.
\end{array}
\end{equation}
As a result, we conclude that $\tilde{X}_1\preceq\tilde{X}_2$.
\end{proof}

The partial order relationships defined over fuzzy structures play a key role in the approach presented in this paper for VBDM problems. Since values and action evaluations are represented as a series of fuzzy numbers, it is essential to define a procedure that orders them. For this reason, in our work, it is advisable to adopt a stricter order relation than $\lesssim$ because of its comparative limitations. Here, the partial-order relationship $\preceq$ introduced can be used for ranking. It indicates that the farther the distance from the fuzzy origin of the coordinate system, the more preferred the action.

\section{The Fuzzy Unweighted Value-Based Decision-Making Framework}\label{s:problem_statement}

The main goal of value-based decision-making is to determine preferences among different actions to perform in a strategic choice environment by considering a human value system that impacts the outcome. This problem requires a MAS approach in which several relevant factors must be considered properly. To this end, in this work, we introduce a theoretical framework for computing the trade-offs between competing values by considering values of different natures (quantitative and qualitative). The components of our proposal deal with the uncertainty of the criteria, together with an optimisation stage that gives us insightful information about the balance and variation of actions when evaluating them. The fuzzy logic helps us to model the nature of values from data samples, and the unweighted approach is intended to eliminate the assignment of biased weighting schemes. Finally, the aggregation technique allows us to generate highly customisable options to meet the demands of the stakeholders.

\subsection{Problem Statement}

The problem statement we introduce in this paper consists of several elements that ensure an end-to-end definition of the decision system. To focus the casuistry on stakeholders, we have considered the tuple with the parties that allow us to both establish and solve the problem.

\begin{definition}[FUW-VBDM problem]
    A fuzzy unweighted value-based decision-making problem is defined as a tuple: 
    \begin{equation}
        \left(
              Ag,
              A,
              V,
              \preceq,
              \tilde{X},
              \tilde{\Omega}_{lu},
              \tilde{\M}^{p}
        \right)
    \end{equation}
    so that it is composed of agents ($Ag$), actions ($A$), values ($V$), fuzzy order relationship ($\preceq$), decision matrix ($\tilde{X}$), the domain of weights ($\tilde{\Omega}_{lu}$) and a score function ($\tilde{\M}^{p}$) based on a $p$-mean with $p\in\R$. The solution to this problem is a ranking-based preference system regarding the actions.
\end{definition}

Two essential contributions to handling the FUW-VBDM problem are the use of a decision matrix composed of both quantitative and qualitative attributes and a constrained optimisation stage of the score function over the space of weighting schemes.

\begin{example}
Let us assume a university wants to standardise exam methods across faculties and starts a consultation about the best exam method. For the sake of simplicity, let us assume the exam alternatives are: multiple-choice (MC) and open-answer (OA) exams. The university wants to consider two values when making that decision: fairness (F) and cost (C). Obviously, fairness is not only difficult to quantify but also ambiguous, whereas cost can be quantified, but it is subjective.
\end{example}

\subsection{Determination of the decision matrix}\label{s:decision_matrix}

The paradigm of a value-based decision-making problem can be understood as a scenario in which different actions, assessed with moral values, are evaluated to ascertain the most appropriate or preferred one. To this end, the statement of the problem requires that different agents $(Ag)$ determine a set of actions $(A)$ evaluated on values $(V)$. As we have to make inferences about the values, we need to determine various assessments for each moral value. We indicate the ethical valuation of a value within an action by means of a valuation scheme.

\begin{definition}[Agent assessment vector]
    Given two sets of actions and values, we define the agent assessment vector for the $i\in A$ action and the $j\in V$ value as a $|Ag|$-dimensional vector:
    \begin{equation}\label{eq:assessments_vector}
        \Gamma_{ij} = \left(\gamma^{1}_{ij}, \dots, \gamma^{|Ag|}_{ij}\right)\in[0,1]^{|Ag|}.
    \end{equation}
    Thus, $\Gamma_{ij}$ assesses each value in $|Ag|$ disjoint competencies.
\end{definition}

The assessment vectors can be interpreted as a way of determining the impact of a particular value within an action. In fact, each $\Gamma_{ij}$ is composed of the valuations of the multi-agent system. Such assessment can be understood as the rating of an expert panel or the percentage breakdown of association with certain qualities.

In our proposal, we can consider values that are both qualitative and quantitative. For that reason, we define different information fusions for aggregating the agent assessment vectors depending on their nature.


\begin{definition}[Quantitative transformation]\label{def:qt_transform}
Let $\Gamma_{ij} = \left(\gamma^{1}_{ij}, \dots, \gamma^{|Ag|}_{ij}\right)$ the ${ij}$-assessment vector of a quantitative value. The transformation of $\Gamma_{ij}$ into the ${(i,j)}\in A\times V$ entry of the decision matrix is defined as the following trapezoid fuzzy number:
    \begin{equation}\label{eq:qt_transform}
        \Psi_{\text{QT}}(\Gamma_{ij}) = 
        \left(
        \min\Gamma_{ij},\
        \bar{\Gamma}_{ij} - \sigma_{\Gamma_{ij}},\
        \bar{\Gamma}_{ij} + \sigma_{\Gamma_{ij}},\
        \max\Gamma_{ij}
        \right)
    \end{equation}
where $\bar{\Gamma}_{ij}$ and $\sigma_{\Gamma_{ij}}$ stand for the mean and deviation according to the $k\in Ag$ agent.
\end{definition}

\begin{definition}[Qualitative transformation]\label{def:ql_transform}
Let $\Gamma_{ij} = \left(\gamma^{1}_{ij}, \dots, \gamma^{|Ag|}_{ij}\right)$ the ${ij}$-assessment vector of a qualitative value. The transformation of $\Gamma_{ij}$ into the ${ij}$-entry of the decision matrix is defined as the following trapezoid fuzzy number:
    \begin{equation}\label{eq:ql_transform}
    \Psi_{\text{QL}}(\Gamma_{ij}, \tilde{\Lambda}_{ij}) = 
    \bigoplus_{c=1}^{C}\left(\frac{1}{|Ag|}\sum_{k\in Ag}\mathbf{1}\left[\gamma_{ij}^{k} = c\right]\right)\odot\tilde{\Lambda}_{ij}^{c},
    \end{equation}
    so that $c\in\{1,\dots,C\}$ represents the categories or labels of the qualitative scale and $\Lambda_{ij} = \left(\tilde{\Lambda}_{ij}^{1}, \dots, \tilde{\Lambda}_{ij}^{C}\right)$ the fuzzy correspondences of each category. The $\mathbf{1}[\cdot]$ operator stands for the Iverson bracket.
\end{definition}

From these transformations, we can generate the fuzzy trapezoids that will allow us to know the valuation of each of the values for every action.

\begin{example}
Once the problem is posed, the fairness of each exam method is determined by performing a survey where the university academics state if the type of test is fair, neutral or unfair. The teachers' answers are broken down as follows:
\begin{center}
    \begin{tabular}{rlcl}
        {                       } & Fair     & Neutral & Unfair \\
        $\Gamma_{\text{MC, F}} =$ & (0.07,   & 0.45,   & 0.48), \\
        $\Gamma_{\text{OA, F}} =$ & (0.79,   & 0.18,   & 0.03).
    \end{tabular}
\end{center}


The aggregated fairness, which in our case is represented as a fuzzy qualitative value, is modelled using the following fuzzy correspondences for the different survey answers:

\begin{figure}[ht!]
    \centering
    \begin{tikzpicture}
        \centering
        \begin{axis}[
            width=0.6\linewidth,
            axis lines = left,
            ytick={0, 0.5, 1},
            xmin=0,
            xmax=1.05,
            ymin=0,
            ymax=1.1,
            yscale=.5,
            axis equal image = true,
            ]
        \addplot[red, mark=*, line width=1.25pt, mark options={mark size=2.5pt}]
            coordinates {(0.0, 1) (0.2, 1) (  0.4, 0)};
        \node[red] at (axis cs: 0.1, 0.8) {Unfair};
        
        \addplot[orange, mark=*, line width=1.25pt, mark options={mark size=2.5pt}]
            coordinates {(0.2, 0) (0.4, 1) (0.6, 1) ( 0.8, 0)};
        \node[orange] at (axis cs: 0.5, 0.8) {Neutral};
    
        \addplot[green, mark=*, line width=1.25pt, mark options={mark size=2.5pt}]
            coordinates {(0.6, 0) (0.8, 1) (1, 1)};
        \node[green] at (axis cs: 0.9, 0.8) {Fair};
    
        \addplot[black, dashed, mark=none, opacity=.5] coordinates {(1, 1) (1, 0)};
        
        \end{axis}
    \end{tikzpicture}
    \caption{Fuzzy correspondences of the three possible survey answers.}
\end{figure}

The time costs -- which have a quantitative nature -- are measured by the time involved in preparing, setting, and marking the test. After timing teachers on both tasks, where the sample size is $|Ag|=100$, we obtain the density distribution displayed in Figure~\ref{fig:density}. From such sampling, we transform both densities into LR-fuzzy numbers as in (\ref{def:qt_transform}). Thus, we make full use of the entire range of values obtained.

\begin{figure}[ht!]
  \centering
  \includegraphics[width=0.6\linewidth]{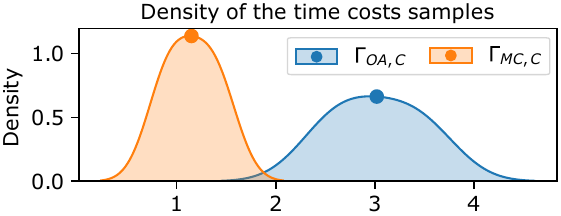}
  \caption{Density plot of the cost features for the open answer and multiple-choice tests measured in hours. The ``$\bullet$'' ticks indicate the mean of each distribution.}
  \label{fig:density}
\end{figure}

\end{example}

The last step to perform is the aggregation of both assessment vectors and their fuzzy correspondences. The following definition provides the aggregation strategy.

\begin{definition}[Fuzzy decision matrix]
    Let $\Gamma_{ij} = \left(\gamma^{1}_{ij}, \dots, \gamma^{|Ag|}_{ij}\right)$ the $ij$-th assessment vector and $\tilde{\Lambda}_{ij} = \left(\tilde{\Lambda}_{ij}^{1}, \dots, \tilde{\Lambda}_{ij}^{C}\right)$ the respective fuzzy correspondences when qualitative. We define the fuzzy decision matrix that evaluates the $i$ action on the $j$ value as:
    \begin{equation}\label{eq:aggregated_fuzzy_trapezoid}
        \tilde{X}_{ij} = 
        \begin{cases}
            \Psi_{\text{QT}}(\Gamma_{ij}),                       & \text{if } j\in V \text{ quantitative},\\
            \Psi_{\text{QL}}(\Gamma_{ij}, \tilde{\Lambda}_{ij}), & \text{if } j\in V \text{ qualitative}.
        \end{cases}
    \end{equation}
\end{definition}



\begin{example}
According to (\ref{eq:aggregated_fuzzy_trapezoid}) and the assessment vectors previously shown, we can compute, for instance, the entry associated with the fairness of the multiple choice test. In other words, we compute the $\tilde{X}_{\text{MC, F}}$ entry of the decision matrix. As this value is qualitative, we proceed as follows:
\[
\begin{array}{rl}
    \tilde{X}_{\text{MC, F}} & = 0.07\odot(0.0, 0.0, 0.2, 0.4) 
     \oplus 0.45\odot(0.2, 0.4, 0.6, 0.8) 
     \oplus 0.48\odot(0.6, 0.8, 1.0, 1.0) = \\
     & = (0.132,  0.236,  0.436,  0.622).
\end{array}
\]

If we proceed in the same way for each of the remaining combinations, we obtain the fuzzy decision matrix of Table~\ref{t:matrix_random_example}.
\begin{table}[ht!]
    \centering
    \caption{Aggregated fuzzy decision matrix for the test selection problem.}
    \label{t:matrix_random_example}
    \begin{tabularx}{\linewidth}{lrr}
        \toprule
        Test &
        \multicolumn{1}{c}{Fairness} &
        \multicolumn{1}{c}{Costs} \\
        \midrule
        MC &  $(0.132,  0.236,  0.436,  0.622)$ &  $(2.501,  2.808,  3.208,  3.564)$ \\
        OA &  $(0.510,  0.704,  0.904,  0.946)$ &  $(0.846,  1.032,  1.266,  1.465)$ \\
        \bottomrule
    \end{tabularx}
\end{table}
\end{example}

\subsection{Domain of Bounded Fuzzy Weights}\label{s:domain_weights}

Every data-driven technique for decision-making requires a weighting scheme to determine the relative impact of the attributes in the utility function. For that reason, it is essential to consider it when applying a specific algorithm. The role of weights is to determine the relative importance of the criteria selected for a particular problem. However, we cannot neglect the sensitivity attached to them. In this paper, we present an unweighted method for solving such a concern. The core idea is to define a weighting space by means of a set of lower and upper bounds $\{(l_j,u_j)\}_{j\in V}$ that sets the constraints subjected to values \cite{mitesis2023}.

\begin{definition}[Fuzzy bounds]\label{def:fuzzy_bounds}
    Given a finite set of fuzzy numbers $\tilde{A} = \{\tilde{a}_j\}_{j\in V}\subseteq\mathscr{X}$, we define $\tilde{l} = \{\tilde{l}_j\}_{j\in V}$ and $\tilde{u} = \{\tilde{u}_j\}_{j\in V}$ as the lower and upper bounds of $\tilde{A}$ if and only if $\tilde{l}_j \lesssim \tilde{a}_j \lesssim \tilde{u}_j$, $\forall j\in V$.
\end{definition}

According to the Definition~\ref{def:fuzzy_bounds}, the weights of the decision-making problem can be considered as variables in a mathematical constrained optimisation problem subject to the restriction that they belong to the feasible set of weights. To be more specific, if we consider two bounds $\tilde{l}_{j}$ and $\tilde{u}_{j}$ we can generate a space of weights $\tilde{\Omega}_{lu}$ as a bounded subset of $\mathscr{X}$ so that we can ensure the Zadeh's Extension Principle \cite{deBarros2017}, i.e.:
\begin{equation}\label{def:omega_bounds_constraints}
    \sum_{j\in V} {w}_j^L \le 1 \le \sum_{j\in V} {w}_j^R \
    \hbox{(constraints)}
    \quad
    \hbox{ and }
    \quad
    \fuzzyzero\lesssim\tilde{l}_j\lesssim\tilde{u}_j\lesssim\fuzzyone \
    \hbox{(bounds)},
\end{equation}
for each value $j\in V$.

\begin{definition}[Domain of bounded fuzzy weights]
Let $\tilde{l}$ and $\tilde{u}$ two sets of LR-fuzzy bounds provided $\fuzzyzero\lesssim\tilde{l}_j\lesssim\tilde{u}_j\lesssim\fuzzyone$ for all $j\in V$, we define the domain of possible weighting schemes in $\mathscr{X}$ as:
\begin{equation}\label{def:omega_fuw}
    \tilde{\Omega}_{lu}\triangleq
    \left\{
    \left[\wfuzzy\right]_{j\in V}\subseteq\mathscr{X}
    \Bigg |
    \sum_{j\in V} {w}_j^L \le 1 \le \sum_{j\in V} {w}_j^R, \
    \tilde{l}_j\lesssim\tilde{w}_j\lesssim\tilde{u}_j,\
    \forall j\in V
    \right\}.
\end{equation}
\end{definition}

\begin{example}
Following the stakeholders' guidelines, the fairness value has to have a major role in the final decision. Hence, we have to build a domain of bounded fuzzy weights to satisfy such a constraint. The domain is calculated using the bounds presented in Table~\ref{t:bounds_random_example}.
\begin{table*}[ht!]
    \centering
    \caption{Fuzzy lower and upper bounds for the test selection problem.}
    \label{t:bounds_random_example}
    \begin{tabularx}{\linewidth}{XXX}
        \toprule
        {Bound} &
        {Fairness} &
        {Cost} \\
        \midrule
        $\tilde{l}$ & $(0.60, 0.70 , 0.80 , 0.90)$ &  $(0.10, 0.15, 0.20, 0.25)$ \\
        $\tilde{u}$ & $(0.75, 0.80 , 0.90 , 1.00)$ &  $(0.15, 0.20, 0.25, 0.30)$ \\
        \bottomrule
    \end{tabularx}
\end{table*}
\end{example}

\begin{theorem}[Generalisation of the weighting space]
    For any VBDM problem, there exist two sets of fuzzy bounds $\tilde{l}$ and $\tilde{u}$ such that the problem can be stated through a domain of bounded fuzzy weights $\tilde{\Omega}_{lu}$.
\end{theorem}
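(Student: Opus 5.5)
We read the statement as saying that every conventional VBDM weighting configuration, crisp or fuzzy, fixed or interval-constrained, is a special case of some domain $\tilde{\Omega}_{lu}$ as in (\ref{def:omega_fuw}). The proof is therefore a construction. Given an arbitrary VBDM problem with weights $w=(w_j)_{j\in V}$ (possibly fixed, possibly constrained to a set), we exhibit bounds $\tilde{l}$ and $\tilde{u}$ such that the admissible weighting schemes of the original problem coincide with, or are embedded in, $\tilde{\Omega}_{lu}$. We treat two cases in order and then combine them.

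\textbf{Case 1: a fixed crisp vector.} Let $w\in[0,1]^{|V|}$ with $\sum_j w_j=1$. Identify each $w_j$ with the degenerate LR-fuzzy number $\tilde{w}_j=(w_j,w_j,0,0)_{L,R}$, and set $\tilde{l}_j=\tilde{u}_j=\tilde{w}_j$.
\begin{itemize}
\item Reflexivity of $\lesssim$ gives $\tilde{l}_j\lesssim\tilde{w}_j\lesssim\tilde{u}_j$.
\item Since $0\le w_j\le 1$, Lemma~\ref{lemma:fuzzy_order_alpha_cut} applied to the $\alpha$-cuts $\{0\}$, $\{w_j\}$, $\{1\}$ gives $\fuzzyzero\lesssim\tilde{l}_j\lesssim\tilde{u}_j\lesssim\fuzzyone$.
\item The constraint in (\ref{def:omega_bounds_constraints}) reads $\sum_j w_j\le 1\le\sum_j w_j$, which holds with equality.
\end{itemize}
Any fuzzy scheme in $\tilde{\Omega}_{lu}$ satisfies $\tilde{w}_j\lesssim\tilde{u}_j$ and $\tilde{l}_j\lesssim\tilde{w}_j$ with $\tilde{l}_j=\tilde{u}_j$, so antisymmetry forces $\tilde{w}_j=w_j$. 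Hence $\tilde{\Omega}_{lu}=\{w\}$ and the classical problem is recovered exactly.

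\textbf{Case 2: interval or fuzzy weights.} This covers the general unweighted or fuzzy setting, where each weight ranges over $[a_j,b_j]\subseteq[0,1]$ or is itself fuzzy.
\begin{itemize}
\item For interval weights, take $\tilde{l}_j=(a_j,a_j,0,0)$ and $\tilde{u}_j=(b_j,b_j,0,0)$.
\item For a family of fuzzy weights, take as bounds the pointwise $\alpha$-cut infimum and supremum of the family: $\inf_\alpha$ of the left endpoints and $\sup_\alpha$ of the right endpoints, truncated to $[0,1]$.
\item By Lemma~\ref{lemma:fuzzy_order_alpha_cut}, each admissible weight $\tilde{w}_j$ satisfies $\tilde{l}_j\lesssim\tilde{w}_j\lesssim\tilde{u}_j$, and the chain $\fuzzyzero\lesssim\tilde{l}_j\lesssim\tilde{u}_j\lesssim\fuzzyone$ holds because all cuts lie in $[0,1]$.
\end{itemize}

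\textbf{Main obstacle: nonemptiness under the sum constraint.} We must show $\tilde{\Omega}_{lu}\neq\emptyset$ in Case 2, i.e.\ the bounds do not make $\sum_j w_j^L\le1\le\sum_j w_j^R$ infeasible. The original problem has at least one normalised scheme $w^*$ with $\sum_j w^*_j=1$, and we have shown it lies between the bounds. Its degenerate fuzzy version then satisfies both inequalities with equality, so it belongs to $\tilde{\Omega}_{lu}$.

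The construction of the lower and upper envelopes must also yield genuine LR-fuzzy numbers, meaning quasi-concave membership with upper semicontinuous reference functions. To secure this, we define the envelopes directly through their nested $\alpha$-cuts and invoke the representation of a fuzzy number by its $\alpha$-cut family. If that technical step fails for arbitrary families, we fall back to the trapezoidal hulls determined by the $0$- and $1$-cuts, which suffices for the framework's trapezoidal entries.
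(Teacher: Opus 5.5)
Your Case 1 and the interval part of Case 2 are correct, and they take a different route from the paper. The paper uses one common bound for all values, $l=\min_j w_j$ and $u=\max_j w_j$ (and $\tilde l=\bigwedge_{j\in V}\tilde w_j$, $\tilde u=\bigvee_{j\in V}\tilde w_j$ in the fuzzy case). It then only checks that the given scheme belongs to $\tilde{\Omega}_{lu}$. That domain usually contains many other schemes, so Step 2 would return a score range rather than the original score. Your per-value degenerate bounds force $\tilde{\Omega}_{lu}=\{w\}$ by antisymmetry, so the minimum and maximum scores both equal the fixed-weight score. That is an exact restatement rather than an embedding. You also handle interval-constrained problems, which the paper does not treat explicitly. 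The paper's uniform bounds instead buy a nontrivial unweighted domain generated from any single scheme: closer to the spirit of the framework, but weaker as a representation result.

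The fuzzy sub-case of Case 2, however, has two steps that fail as written.

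First, your envelope fixes only the infimum of the left endpoints and the supremum of the right endpoints. Combined into one fuzzy number, this is the hull of the family, whose cuts contain every member's. It therefore violates $\sup\{\tilde l_{j,\alpha}\}\le\sup\{\tilde w_{j,\alpha}\}$ when used as a lower bound, and the mirror inequality when used as an upper bound. Lemma~\ref{lemma:fuzzy_order_alpha_cut} requires the lower bound to take the infimum of \emph{both} endpoints and the upper bound the supremum of both. These are the $\lesssim$-meet and join of the family, which is exactly the paper's $\bigwedge$/$\bigvee$. Your trapezoidal fallback must also be built from the meet and join, not from a hull.

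Second, your nonemptiness argument replaces the normalised scheme by a crisp ``degenerate version''. A crisp $c$ can satisfy $\tilde l_j\lesssim c\lesssim\tilde u_j$ only if the whole support of $\tilde l_j$ lies at or to the left of $c$ and that of $\tilde u_j$ at or to the right. Take a single fixed non-crisp weight, so $\tilde l_j=\tilde u_j=\tilde w_j=(w^L_j,w^R_j,\xi^L_j,\xi^R_j)$. The outermost cuts then force $w^R_j+\xi^R_j\le c\le w^L_j-\xi^L_j$, which is impossible. The fix is to use the fuzzy scheme $\tilde w^*$ itself. It lies between the meet and the join by construction, and it satisfies $\sum_j w^{*L}_j\le 1\le\sum_j w^{*R}_j$ under the extension-principle normalisation that the paper also assumes.
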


\begin{proof}
    Let us consider a VBDM problem with $A$ actions and $V$ values. For the crisp case, the decisional weights can be defined as an $|V|$-dimensional vector $w\in\R^{|V|}$ so that $\sum_{j\in V} w_j=1$. Then, if we select $l=\min\{w_j : j\in V\}$ and $u=\max\{w_j : j\in V\}$ the VBDM problem attached is defined in $\tilde{\Omega}_{lu}$. For the non-crisp case, the fuzzy decisional weights $\tilde{w}$ consist of $|V|$ fuzzy numbers. For Zadeh's Extension Principle, we know that the sum of the left spreads is lower or equal to one and the sum of the right spreads is lower or equal to one, w.r.t each weight component. Then, if we select $\tilde{l} = \bigwedge_{j\in V} \tilde{w}_j$ and $\tilde{u} = \bigvee_{j\in V} \tilde{w}_j$ the VBDM problem attached is defined in $\tilde{\Omega}_{lu}$.
\end{proof}

From the above theorem, we can deduce that any VDBM problem can be addressed by using the concept of a space of feasible solutions contained in the domain of bounded fuzzy weights. It produces an implementation-oriented representability result for stakeholders when human values are part of the decision-making process. Thus, the need to establish spaces for decisional weights is highlighted, since it is latent that policy-makers use this method, even if implicitly.

\subsection{Generalised Fuzzy Mean as Score Function}

From a given dataset, the decision-making analysis described requires an aggregation strategy that produces a final score. This score will generate the ranking that will determine the preferences among the actions. Its definition must be customisable to meet the needs of the stakeholders, regardless of the scenario they find themselves in. On this account, we propose a generalised fuzzy $p$-mean to ensure their applicability, understanding, explainability, and transparency in decision-making.

\begin{definition}[Fuzzy $p$-mean score]
    Given a fuzzy decision matrix $[\tilde{x}_{ij}]\in\mathscr{X}^{|A|\times |V|}$, a weighting scheme $\tilde{w}\in\tilde{\Omega}_{lu}$ and a power $p\in\R\backslash\{0\}$, we define the fuzzy $p$-mean score as the function for each action $i\in A$ as follows:
    \begin{equation}\label{eq:FWMM}
    \begin{array}{cccc}
        \tilde{\M}^p_i: & \mathscr{X}^{|A|\times |V|}\times\tilde{\Omega}_{lu} & \to & \mathscr{X} \\
         & \left( [\tilde{x}_{ij}], \tilde{w} \right) &
         \mapsto& \displaystyle\left[\bigoplus_{j\in V} \tilde{w}_j \otimes\tilde{x}_{ij}^p \right]^{\frac{1}{p}}
    \end{array}
\end{equation}

\end{definition}

It is worth mentioning that, based exclusively on mathematical formalisation, we cannot define it either when the power tends to infinity or zero. However, we can make use of the fuzzy calculus to solve this issue \cite{bullen2013}. The following result shows us the existence of the $p$-mean for these particular cases.

\begin{theorem}[Score consistency]\label{theorem:existence}
    The score function $\tilde{\M}_i^p$ is well-defined for any $p\in\R\cup\{-\infty,+\infty\}$ whenever all the components of the decision-matrix are positive.
\end{theorem}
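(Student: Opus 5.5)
The plan is to work componentwise, using the arithmetic of Definition~\ref{def:fuzzy_operations}. Because every entry $\tilde{x}_{ij}$ has positive components and every weight $\tilde{w}_j\in\tilde{\Omega}_{lu}$ lies between $\fuzzyzero$ and $\fuzzyone$, each of $\tilde{x}_{ij}^p$, $\tilde{w}_j\otimes\tilde{x}_{ij}^p$ and their $\oplus$-sum is again an LR-fuzzy number whose four vertices are obtained vertex by vertex. For instance, the left vertex of $\tilde{\M}^p_i$ is the crisp weighted power mean $\bigl(\sum_{j} w_j^L (x_{ij}^L)^p\bigr)^{1/p}$, and the other three components are analogous.

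So the statement reduces to three cases for a finite crisp weighted power mean with positive arguments $y_j>0$ and nonnegative weights $\omega_j$. By Lemma~\ref{lemma:fuzzy_order_alpha_cut}, $\tilde{l}_j\lesssim\tilde{w}_j$, so the weights inherit nonnegativity from $\fuzzyzero\lesssim\tilde{l}_j$. The cases are:
\begin{itemize}
\item \textbf{Finite $p\neq 0$.} Positivity of the $y_j$ makes $y_j^p$ finite and positive even for $p<0$. The weighted sum is therefore positive as soon as some weight is positive, so the $1/p$ power is defined.
\item \textbf{$p\to 0$.} Write $\bigl(\sum_j\omega_j y_j^p\bigr)^{1/p}=\exp\bigl(\tfrac1p\log\sum_j\omega_j e^{p\log y_j}\bigr)$ and expand to first order in $p$. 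With $S=\sum_j\omega_j$, this gives $\log S/p+\sum_j\omega_j\log y_j/S+o(1)$. When $S=1$ the limit is the weighted geometric mean $\prod_j y_j^{\omega_j}$, as in \cite{bullen2013}, and we set this as $\tilde{\M}^0_i$ componentwise.
\item \textbf{$p\to\pm\infty$.} The standard squeeze $\omega_{\min}^{1/p}\max_j y_j\le(\sum_j\omega_j y_j^p)^{1/p}\le S^{1/p}\max_j y_j$ (over positive weights) gives the limit $\max_j y_j$ as $p\to+\infty$. Replacing $y_j$ by $1/y_j$ gives $\min_j y_j$ as $p\to-\infty$. These limits define $\tilde{\M}^{\pm\infty}_i$ vertex-wise; they remain well defined because $\max$ and $\min$ of positive reals are positive.
\end{itemize}

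Finally I must check that the resulting quadruple is a legitimate LR-fuzzy number. The vertex ordering $x^L-\alpha^L\le x^L\le x^R\le x^R+\alpha^R$ needs to be preserved. This holds because $t\mapsto t^p$ for $p>0$, the weighted sums, and $t\mapsto t^{1/p}$ are all monotone, and the limits (max, min, geometric mean) are monotone in each argument too. For $p<0$ the power reverses the order, but the outer $1/p$ power reverses it back, so the composite stays monotone increasing. The same reversal-and-restoration applies to the reference functions, in the way $1/\tilde{X}$ swaps $L$ and $R$.

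The main obstacle is the case $p\to 0$ when the weights do not sum exactly to one, since in the fuzzy domain we only have $\sum w_j^L\le 1\le\sum w_j^R$. Then the term $\log S/p$ blows up, and the left and right vertices diverge in opposite directions. I would handle this by defining the $p=0$ score with normalised weights $\omega_j/S$ on each vertex (equivalently, taking the limit of the normalised mean). The constraint $\sum_j w_j^L\le1\le\sum_j w_j^R$ guarantees $S>0$ on the relevant components, so the geometric mean is well defined. The $p\to\pm\infty$ limits are unaffected, because $S^{1/p}\to1$ whenever $S>0$.
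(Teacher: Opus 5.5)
You follow the same componentwise route as the paper. The paper's proof simply asserts that positivity of the matrix entries makes every componentwise operation of the fuzzy arithmetic legitimate. Your case analysis is far more explicit, but two of the steps you add do not go through.

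\textbf{Positivity of the weight sums.} Your claim that $\sum_j w_j^L\le 1\le\sum_j w_j^R$ guarantees $S>0$ ``on the relevant components'' is false. It only bounds the right-hand sum away from zero. Since $\tilde l_j=\fuzzyzero$ is admissible, $\tilde\Omega_{lu}$ can contain weights with $w_j^L=0$ for every $j$ (e.g.\ each $\tilde w_j$ with core $[0,1/|V|]$). Moreover, $\tilde\Omega_{lu}$ places no sum constraint at all on the spread (outer) components of the weights. For such weights:
\begin{itemize}
\item when $p<0$, the left component is $0^{1/p}$;
\item your normalised $p=0$ mean divides by $S=0$;
\item the $p\to\pm\infty$ limits are not $\max_j y_j$ and $\min_j y_j$.
\end{itemize}
Your squeeze only yields the extremum over indices with positive weight, while the paper's $\widetilde{\max}$ and $\widetilde{\min}$ run over all $j\in V$. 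So for $p\le 0$ and $p=\pm\infty$ you need an extra hypothesis on the weights, such as all components of every $\tilde l_j$ being strictly positive. The statement's only hypothesis concerns the decision matrix, so it does not provide this, and the paper's proof passes over the point as well.

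\textbf{Validity of the output as an LR-fuzzy number.} This check fails for $p<0$. Your reversal-and-restoration argument tracks only the arguments, for a fixed weight vector. But in the formula you wrote, the left endpoint uses $w_j^L$ and the right one uses $w_j^R$, and for $p<0$ the unnormalised mean $(\sum_j\omega_j y_j^p)^{1/p}$ is decreasing in the weights. Take $x_{ij}^L=x_{ij}^R=y$ for all $j$. The core endpoints of $\tilde{\M}^p_i$ are then $(S^L)^{1/p}y$ and $(S^R)^{1/p}y$. For $p=-1$, $S^L=0.2$, $S^R=1$ these are $5y$ and $y$, so the core is reversed.

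To rescue $p<0$ you must carry the $L\leftrightarrow R$ swap of $1/\tilde X$ through the weights too. The left endpoint becomes $(\sum_j w_j^R(x_{ij}^L)^p)^{1/p}$ and the right one $(\sum_j w_j^L(x_{ij}^R)^p)^{1/p}$. Monotonicity in both weights and arguments then gives the ordering, though the right endpoint again needs $\sum_j w_j^L>0$.

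\textbf{The case $p=0$.} The paper does not take a limit; it defines $\tilde{\M}^0_i=\bigotimes_{j}\tilde r_{ij}^{\tilde w_j}$ directly with unnormalised weights. Your observation that the limit exists only when the weight components sum to one is correct, and it shows the paper's $\tilde{\M}^0_i$ is not the limit of $\tilde{\M}^p_i$. But your normalised replacement is a different object from the paper's. Also, $\prod_j y_j^{\omega_j/S}$ is not monotone in the weights, so no pairing of weights with endpoints guarantees the ordering there either.
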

\begin{proof} 
    According to the hypothesis, the decision matrix $[\tilde{x}_{ij}]$ is $\succeq$-bounded by any positive fuzzy quantity. Then, we can guarantee all the axioms of the fuzzy arithmetic for each component of the LR-fuzzy numbers, and so the $p$-norm of the $\tilde{\M}_i^p$ function.
\end{proof}

According to Theorem~\ref{theorem:existence}, we can define the following particular cases for our $\tilde{\M}^p$ score:
\begin{equation}
    \begin{array}{rrrl}
         p = -\infty & : & \displaystyle\tilde{\M}^{-\infty}_i(\tilde{r}_{ij},\tilde{w}) & = \displaystyle \widetilde{\min_{j\in V}}\left\{\tilde{r}_{ij}\right\}, \vspace{1mm}\\
         
         p = 0 & : & \displaystyle\tilde{\M}^{0}_i(\tilde{r}_{ij},\tilde{w}) & = \displaystyle \bigotimes_{j\in V}\tilde{r}_{ij}^{\tilde{w}_j}, \vspace{1mm}\\
         
         p = +\infty & : & \displaystyle\tilde{\M}^{+\infty}_i(\tilde{r}_{ij},\tilde{w}) & = \displaystyle \widetilde{\max_{j\in V}}\left\{\tilde{r}_{ij}\right\}, \vspace{1mm}\\
    \end{array}
\end{equation}

In this manner, we can modify the value of $p$ depending on the decision-makers' demands on how to aggregate moral values. As shown, their outlook can vary from a totally pessimistic approach (i.e. $p = -\infty$) to an optimistic approach that values the best possible case (i.e. $p = +\infty$). An interesting result of this type of aggregation is that it satisfies the monotonicity property with respect to the $p$ power.

\begin{theorem}[Score monotony]\label{theorem:monotony}
    Given a decision matrix $\tilde{X}$ composed of positive LR-fuzzy numbers and $p,q\in\R$ so that $p\le q$, then
    \begin{equation}\notag
        \tilde{\M}_{i}^{-\infty} \lesssim
        \tilde{\M}_{i}^{p} \lesssim
        \tilde{\M}_{i}^{q} \lesssim
        \tilde{\M}_{i}^{+\infty}.
    \end{equation}
\end{theorem}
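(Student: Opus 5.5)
The plan is to reduce the fuzzy statement to the classical power mean inequality by working componentwise, using the $\alpha$-cut characterisation of $\lesssim$ in Lemma~\ref{lemma:fuzzy_order_alpha_cut}.

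\textbf{Step 1: collapse the fuzzy score to crisp power means.} By Definition~\ref{def:fuzzy_operations}, every operation in (\ref{eq:FWMM}) acts componentwise on the four parameters of positive LR-fuzzy numbers. These are $\oplus$, $\otimes$, the power $\tilde{X}^{\lambda}$, and the outer exponent $1/p$. So for a fixed weighting scheme $\tilde{w}\in\tilde{\Omega}_{lu}$, each of the four components of $\tilde{\M}^p_i$ is a crisp weighted power mean of the corresponding components of $\tilde{x}_{ij}$. Taking the left vertex as an example,
\[
\left(\sum_{j\in V} w_j^L\,(x_{ij}^L)^p\right)^{1/p},
\]
and analogously for $x^R$, $\xi^L$, $\xi^R$. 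Theorem~\ref{theorem:existence} guarantees these are well defined, including the limiting cases $p=0,\pm\infty$. The limits are the weighted geometric mean, $\min$ and $\max$, again taken componentwise.

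\textbf{Step 2: apply the classical monotonicity in $p$.} For positive entries and nonnegative weights, the power mean $M_p$ is nondecreasing in $p$ on $[-\infty,+\infty]$ (see \cite{bullen2013}). This yields, for each of the four components,
\[
M_{-\infty}\le M_p\le M_q\le M_{+\infty}.
\]
The standard proof uses Jensen's inequality applied to the convex map $t\mapsto t^{q/p}$, with the signs of $p,q$ handled separately and the case $p=0$ covered by taking the limit.

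\textbf{Step 3 (main obstacle): normalisation of the weights.} The classical result assumes weights summing to one. Here $\tilde{\Omega}_{lu}$ only imposes
\[
\sum_{j\in V} w_j^L\le 1\le \sum_{j\in V} w_j^R .
\]
Before invoking Step 2, I would rescale each component weight vector by its own sum. This changes $M_p$ by the factor $S^{1/p}$, where $S$ is that sum. That factor depends on $p$ and can break monotonicity when $S\neq 1$.

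I would first try to restrict to, or argue for, the normalised representative of each weight vector. This means reading the score as a weighted mean in the sense of \cite{bullen2013}, with weights divided by their sum. This reading is consistent with the $p=\pm\infty$ cases being plain $\min$ and $\max$, which are independent of the weights. If that is not acceptable, the statement should be read with normalised component weights, and I would state that assumption explicitly.

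\textbf{Step 4: lift back to $\lesssim$.} The comparisons from Step 2 give the vertex inequalities at the $0$- and $1$-cuts of the trapezoids, namely on $x^L-\xi^L$, $x^L$, $x^R$ and $x^R+\xi^R$. Because the reference functions are monotone (linear for trapezoids), the $\alpha$-cut endpoints interpolate between these vertices. Hence the inequalities hold for every $\alpha\in[0,1]$.

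Lemma~\ref{lemma:fuzzy_order_alpha_cut} then gives
\[
\tilde{\M}^{-\infty}_i\lesssim\tilde{\M}^p_i\lesssim\tilde{\M}^q_i\lesssim\tilde{\M}^{+\infty}_i .
\]

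One further check is needed. The spreads $\xi$ are combined by the same power-mean formula, so the inequality on $\xi^L$ runs in the same direction as on $x^L$. Comparing the cut point $x^L-\xi^L$ therefore requires an extra argument, or the parametrisation by the vertices themselves. I would use the vertex parametrisation of trapezoids, as in the proof of Lemma~\ref{lemma:fuzzy_order_distance}, so that each vertex is itself a power mean.
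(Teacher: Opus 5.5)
\textbf{Where the gap is.} Your Steps 1, 2 and 4 are, in expanded form, the paper's proof. That proof is one sentence: LR arithmetic is componentwise, and real power means are monotone in $p$. It says nothing about normalisation or spreads, so you have found the right weak points.

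However, Step 3 cannot be closed in the direction you try first. The statement is false for weights in $\tilde{\Omega}_{lu}$ with the score (\ref{eq:FWMM}) as defined. Take $|V|=2$, let every $\tilde{x}_{ij}$ be the same positive trapezoid with core $\{1\}$, and give both weights core $[0.45,0.55]$, so that $\sum_j w^L_j=0.9\le 1\le 1.1=\sum_j w^R_j$. The core of $\tilde{\M}^p_i$ is then $[0.9^{1/p},1.1^{1/p}]$, while $\tilde{\M}^{\pm\infty}_i$ have core $\{1\}$. Consequently:
\begin{itemize}
\item $\tilde{\M}^{-\infty}_i\lesssim\tilde{\M}^{1}_i$, $\tilde{\M}^{1}_i\lesssim\tilde{\M}^{2}_i$ and $\tilde{\M}^{1}_i\lesssim\tilde{\M}^{+\infty}_i$ all fail at $\alpha=1$, however the spreads are parametrised;
\item for $p<0$ the core is not even an interval;
\item as $p\to 0^{\pm}$ the core endpoints tend to $0$ or $+\infty$, so the geometric case is not a limit, contrary to what Step 1 takes from Theorem~\ref{theorem:existence}.
\end{itemize}

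The paper's running example shows the same failure. $\tilde{\M}^{1-\max}_{\text{OA}}=(0.348,0.576,0.892,1.068)$ has right vertex above $1.000$, the largest right vertex in the OA row of Table~\ref{t:norm_matrix_random_example}, because the right vertices of $\tilde{u}$ sum to $1.30$. Those numbers are computed vertex-wise with unnormalised weights (e.g.\ $0.061\approx0.60\,\varepsilon+0.10\cdot0.609$). This also supports your closing switch to the vertex parametrisation.

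\textbf{What to do instead.} Drop the first branch of Step 3. State normalisation as an added hypothesis that corrects the theorem, not as a reading of it. Be aware of what that hypothesis costs, because the two ways of imposing it behave differently.

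First, you can require all four vertex weight vectors to sum to one. Since $\sum_j a_j=\sum_j d_j$ and $a_j\le d_j$, this forces $a_j=b_j=c_j=d_j$, so the weights become crisp. Your vertex-parametrised Steps 1, 2 and 4 then go through, provided $\widetilde{\min}$ and $\widetilde{\max}$ are taken vertex-wise as in your Step 1. Against the extension-principle maximum, the linearly interpolated $\tilde{\M}^q_i$ can overshoot at intermediate $\alpha$.

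Second, you can keep fuzzy weights and divide each vertex weight vector by its own sum. Step 2 still holds vertex by vertex. But different vertices now carry different normalised weights, so the four vertices of $\tilde{\M}^p_i$ need not be ordered, and $\tilde{\M}^p_i$ may not be a trapezoid at all. Step 4 must then be a plain vertex-wise comparison, since Lemma~\ref{lemma:fuzzy_order_alpha_cut} presupposes LR-fuzzy numbers.
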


\begin{proof}
    According to LR-fuzzy arithmetic, we can proceed with the component-by-component calculations. Since we know that the monotonicity of real-valued power means holds for the non-negative case \cite{bullen2013}, then the statement also holds for each component of the decision matrix. In addition, for the non-negativity of the matrix elements $\tilde{x}_{ij}$, we can ensure a well-defined exponentiation. Hence, the component-wise operator is monotonic for LR-fuzzy numbers.
\end{proof}

From this theorem, we can conclude that the score function of Rankzzy is a monotonically increasing fuzzy operator whose image is limited by the minimum and the maximum elements of the decision matrix. This mathematical property is quite useful because we can fit the algorithm according to the demands and value system considered in the decision problem. For such a reason, these aggregations are widely used to solve hierarchical evaluation problems \cite{Guh2008}. Moreover, the stakeholders can know a priori the values between which the scores for each action will oscillate.

Another point to emphasise is that the $p$ parameter of the generalised mean can also be tuned. Depending on the needs, certain aggregations may be preferred over others. When computing the $\M^{p}$, we can perform this score considering the geometric interpretation of the Pythagorean means. If we want to analyse different variation rates or scales in the sample, cases of $p\in\{-1,0\}$ should be taken into account. In particular, both aggregations are convex, thus leading to better conditions when we carry out the optimisation problem. If an explainable and interpretable approach is preferred, then the choice should be made for the $p\in\{1,2\}$ cases.

\section{Rankzzy}\label{s:framework}

\begin{figure*}[h!]
	\centering
	\includegraphics[width=\textwidth]{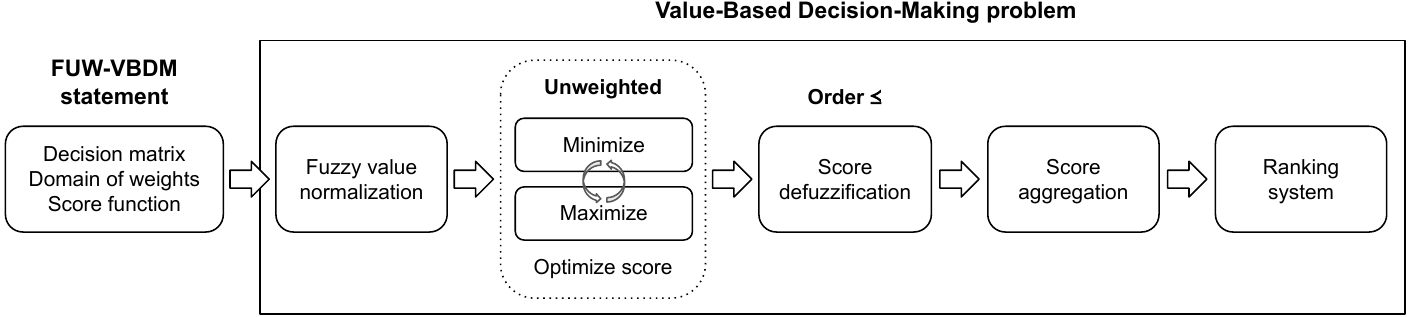}
	\caption{Flowchart with the implementation of the Rankzzy method to solve FUW-VBDM problems by producing customisable ranking systems.}
	\label{fig:rankzzy}
\end{figure*}

With the aim of combining the advantages of fuzzy logic and the non-arbitrary structure of the unweighted approaches in a MAS approach, we propose the Rankzzy method, a novel FUW-VBDM method that combines qualitative and quantitative variables simultaneously through a decision matrix. Rankzzy uses a data-driven method for ranking actions through the fuzzy $p$-mean score previously described. The set of values involved is divided as criteria to maximise ($V_{\max}$) or minimise ($V_{\min}$) so that $V = V_{\max} \cup V_{\min}$.

The justification for using both approaches can be explained from two perspectives. First, we can quantify the uncertainty attached to values using fuzzy sets. Second, the use of the unweighted approach allows us to reduce bias and use flexible weighting schemes instead of fixed ones. Therefore, we generate a framework that prevents some of the common drawbacks of decision-making techniques.

Although fuzzy operations are clearly stated, it is not an easy task to perform the fuzzy optimisation of such ranking functions \cite{Triantaphyllou2000, Zimmermann2001}. The main problem is that the set of fuzzy numbers is not a totally ordered group. Hence, it requires an order relationship $\preceq$ so that we can compare the fuzzy results for each alternative.

\subsection{Algorithmic Implementation}

In this paper, we propose a value-based method for supporting decision-makers when dealing with FUW-VBDM problems. We introduce a generalised fuzzy unweighted operator as a score function to rank actions based on their range of possible values within the domain of weights. In Fig~\ref{fig:rankzzy}, the flowchart of Rankzzy steps is graphically represented. A detailed algorithmic implementation is presented as follows.


\noindent\textbf{Input:} Fuzzy decision matrix $\tilde{X} = \left[\left(x^L_{ij}, x^R_{ij}, \alpha^L_{ij}, \alpha^R_{ij}\right)_{L_{ij}R_{ij}}\right]_{(i,j)\in A\times V}$, fuzzy lower $\tilde{l}$ and upper $\tilde{u}$ bounds of the $\tilde{\Omega}_{lu}$ set, power $p$ of the aggregation function, and stability parameter $\nu\in[0,1]$.

\begin{enumerate}[label=Step \arabic{*}, align=left]
    %
    %
    \item\label{item:normalisation_FuwMM} Normalize the fuzzy decision matrix for each action $i\in A$ and value $j\in V$ as:
    \begin{equation}\label{eq:normalisation}
    \left\{
        \begin{array}{rl}
            \tilde{r}_{ij} = 
            \dfrac{\tilde{x}_{ij} \ominus \tilde{x}_{j}^{\ominus} \oplus \tilde{\boldsymbol{\varepsilon}}}
            {\tilde{x}_{j}^{\oplus} \ominus \tilde{x}_{j}^{\ominus} \oplus \tilde{\boldsymbol{\varepsilon}}}
            &
            \hbox{if } j\in V_{\max},\\
            \tilde{r}_{ij} = 
            \dfrac{\tilde{x}_{j}^{\oplus} \ominus \tilde{x}_{ij} \oplus \tilde{\boldsymbol{\varepsilon}}}
            {\tilde{x}_{j}^{\oplus} \ominus \tilde{x}_{j}^{\ominus} \oplus \tilde{\boldsymbol{\varepsilon}}}
            &
            \hbox{if } j\in V_{\min},
        \end{array}
    \right.
    \end{equation}
    where
    $
    \displaystyle
        \tilde{x}_{j}^{\oplus} = \widetilde{\max_{i\in A}}\{\tilde{x}_{ij}\}
        \hbox{ and }
        \tilde{x}_{j}^{\ominus} = \widetilde{\min_{i\in A}}\{\tilde{x}_{ij}\}.
    $
    The $\tilde{\boldsymbol{\varepsilon}}$ stands for the fuzzy number $(\varepsilon, \varepsilon, 0, 0)$ of an arbitrary $\varepsilon>0$ small quantity to avoid zero division.
    
    %
    %
    
    %
    %
    \item\label{item:score_optimisation_FuwMM} Compute the mathematical constrained optimisation problem by considering $\tilde{\M}_i^p$ as objective function, for each action $i\in A$ as:
    \begin{eqnarray} \label{eq:FuwMM_optimisation}
    \displaystyle \tilde{\M}_i^{p-\min} = \min_{}\left\{\tilde{\M}^p_i(\tilde{r}_{ij},\tilde{w})\ |\ \tilde{w}\in\tilde{\Omega}_{lu}\right\},\\
    \displaystyle \tilde{\M}_i^{p-\max} = \max_{}\left\{\tilde{\M}^p_i(\tilde{r}_{ij},\tilde{w})\ |\ \tilde{w}\in\tilde{\Omega}_{lu}\right\},
    \end{eqnarray}
    where the fuzzy optimisation is conducted considering the $\preceq$ relationship.

    %
    %
    \item\label{item:defuzzification_FuwMM} Defuzzify computing the distance of each $\tilde{\M}_i^{p-\min}$ and $\tilde{\M}_i^{p-\max}$ using the vertex method with respect to the zero crisp number $\tilde{\mathbf{0}}$ to get the ${\M}_i^{p-\min}$ and ${\M}_i^{p-\max}$ scores for each action $i\in A$ as.

    %
    %
    \item\label{item:aggregation_FuwMM} From the stability parameter $0\le\nu\le1$, we aggregate the ${\M}_i^{p-\min}$ and ${\M}_i^{p-\max}$ scores as a convex $p$-mean combination, i.e.:
    \begin{equation} \label{eq:FuwMM_aggregated}
        {\M}_i = 
        \left[
        (1-\nu)\left({\M}_i^{p-\min}\right)^{p} + \nu\left({\M}_i^{p-\max}\right)^{p}
        \right]^{\frac{1}{p}},
    \end{equation}
    where the approach can be considered pessimistic if $\nu\le0.5$, optimistic if $\nu\ge0.5$ and balanced if $\nu=0.5$.
    
    %
    %
    \item\label{item:ranking_FuwMM} Rank alternatives in descending order regarding the ${\M}_i$ scores.
\end{enumerate}
\textbf{Output:} Vector of ${\M}_i$ scores, optimal weights $\left( \tilde{w}_i^{\min}, \tilde{w}_i^{\max} \right)$, and ranking system $\preccurlyeq_{\M}$ induced by the ${\M}_i$ scores.


As a result of the implementation of the Rankzzy algorithm, the output is composed of not only the $\M_i$ score vector, but also the optimal weights obtained in the optimisation problems of (\ref{eq:FuwMM_optimisation}) in \ref{item:score_optimisation_FuwMM}. In essence, they represent the maximum and maximum arguments as $ \tilde{w}_i^{\min} = \arg\min_{\tilde{w}\in\Omega_{lu}}\{\tilde{\M}^p_i(\tilde{r}_{ij},\tilde{w})\}$ and $\tilde{w}_i^{\max}  = \arg\max_{\tilde{w}\in\Omega_{lu}}\{\tilde{\M}^p_i(\tilde{r}_{ij},\tilde{w})\}$ per each action ${i\in A}$. This information helps us to understand the actions' assessments and also ensures the explainability and transparency of our method.

To solve the mathematical optimization problems presented in Step~\ref{item:score_optimisation_FuwMM}, it is essential to take into account that these problems constitute constrained non-linear formulations. Fuzzy weights are considered as decision variables over the $\tilde{\Omega}_{lu}$ set. The objective function consists of the crisp $p$-score through the vertex method. Hence, in this work, we have implemented a Sequential Least Squares Programming (SLSQP) algorithm for dealing with the latent quadratic behaviour of the generalised fuzzy means. Consistent with the original unweighted models, constrained-free methods are excluded, as the optimization problem is defined over a half-space.

\begin{example}
    For the selection of the exam method, we can perform the Rankzzy method using the decision matrix of Table~\ref{t:matrix_random_example} and the $\tilde{\Omega}_{lu}$ set of Table~\ref{t:bounds_random_example}. First, we have obtained the normalised decision matrix for a negligible value of $\varepsilon=10^{-4}$ as shown in Table~\ref{t:norm_matrix_random_example}. Second, we have arithmetically aggregated the $p$-scores, i.e. $p=1$. The results are displayed in Table~\ref{t:scores_random_example}.
    
    \begin{table*}[b!]
        \centering
        \caption{Normalised decision matrix (\ref{item:normalisation_FuwMM}) for the test selection problem.}
        \label{t:norm_matrix_random_example}
        \begin{tabularx}{\linewidth}{Xrr}
            \toprule
            Test &
            \multicolumn{1}{c}{Fairness} &
            \multicolumn{1}{c}{Costs} \\
            \midrule
            MC & $(\varepsilon, 0.128, 0.374, 0.602)$ &  $(0.609, 0.722, 0.869, 1.000)$ \\
            OA & $(0.464,  0.703,  0.948,  1.000)$    &  $(\varepsilon, 0.068, 0.155, 0.228)$ \\
            \bottomrule
        \end{tabularx}
    \end{table*}
    
    \begin{table*}[ht!]
    \centering
    \caption{Rankzzy scores using $p=1$ for the test selection problem broken down by the fuzzy and aggregated numbers. The last two rows contain the lower and upper optimal weights.}
    \label{t:scores_random_example}
    \begin{tabularx}{\linewidth}{XXXX}
        \toprule
        {Test} &
        \multicolumn{1}{c}{$\tilde{\M}_i^{1-\min}$} & 
        \multicolumn{1}{c}{$\tilde{\M}_i^{1-\max}$} & \\
        \midrule
        & \multicolumn{2}{c}{Fuzzy scores} & \\
        \cmidrule{2-4}
        MC &      $(0.061, 0.198, 0.473, 0.792)$ & $(0.091, 0.247, 0.553, 0.902)$ & \\ 
        OA &      $(0.279, 0.502, 0.790, 0.957)$ & $(0.348, 0.576, 0.892, 1.068)$ & \\
        \bottomrule
    \end{tabularx}
    
    \begin{tabularx}{\linewidth}{XXXX}
    \toprule
        Test
        &
        {${\M}_i^{1-\min}$} &
        {${\M}_i^{1-\max}$} &
        {${\M}_i$} \\
        \midrule
        & \multicolumn{2}{c}{Aggregated scores} & \\
        \cmidrule{2-3}
        MC &      $0.945$ &      $1.091$ &  $1.018$ \\
        OA &      $1.367$ &      $1.546$ &  $1.457$ \\
        \bottomrule
    \end{tabularx}
    \end{table*}

    \begin{table*}[ht!]
    \centering
    \caption{Weights of the Rankzzy algorithm using $p=1$ for the test selection problem broken down by the fuzzy lower and upper optimal weights.}
    \label{t:weights_random_example}
    \begin{tabularx}{\linewidth}{XXXX}
        \toprule
        Weights
        &
        Test
        &
        \multicolumn{1}{c}{Fairness} &
        \multicolumn{1}{c}{Costs} \\
        \midrule
        \multirow{2}{*}{$\tilde{W}^{\min}$}
        & MC &  (0.60, 0.70, 0.80, 0.90) &  (0.10,  0.15,  0.20,  0.25) \\
        & OA &  (0.60, 0.70, 0.80, 0.90) &  (0.10,  0.15,  0.20,  0.25) \\
        \midrule
        \multirow{2}{*}{$\tilde{W}^{\max}$}
        & MC &  (0.75, 0.80, 0.90, 1.00) &  (0.15,  0.20,  0.25,  0.30) \\
        & OA &  (0.75, 0.80, 0.90, 1.00) &  (0.15,  0.20,  0.25,  0.30) \\
        \bottomrule
    \end{tabularx}
    \end{table*}
    
    Then, according to the Rankzzy method, we can conclude by saying that the open answer test is preferred over the multiple choice option because $\M_{\text{MC}}\le\M_{\text{OA}}$. This is represented as $\text{MC}\preccurlyeq_{\M}\text{OA}$, thus indicating the preference of the open answer by our framework. Furthermore, it should be noted that ${\tilde{\M}}_{\text{MC}}^{1-\min} \lesssim {\tilde{\M}}_{\text{OA}}^{1-\min}$ and ${\tilde{\M}}_{\text{MC}}^{1-\max} \lesssim {\tilde{\M}}_{\text{OA}}^{1-\max}$, so this relationship is one of total domination. In Figure~\ref{fig:sensitivity_analysis_p_nu}, the sensitivity analysis of the $(p,\nu)$ input parameters is shown to reinforce the superiority of the open answer exam method. From the optimal weights $\tilde{W}^{\min}$ and $\tilde{W}^{\max}$ of the optimisation problem, we can see that the minimum score has attached the lower bound and the maximum the upper bound, just as expected, because $p=1$ aggregates in a linear manner.
    
    \begin{figure}[ht!]
      \centering
      \includegraphics[width=\linewidth]{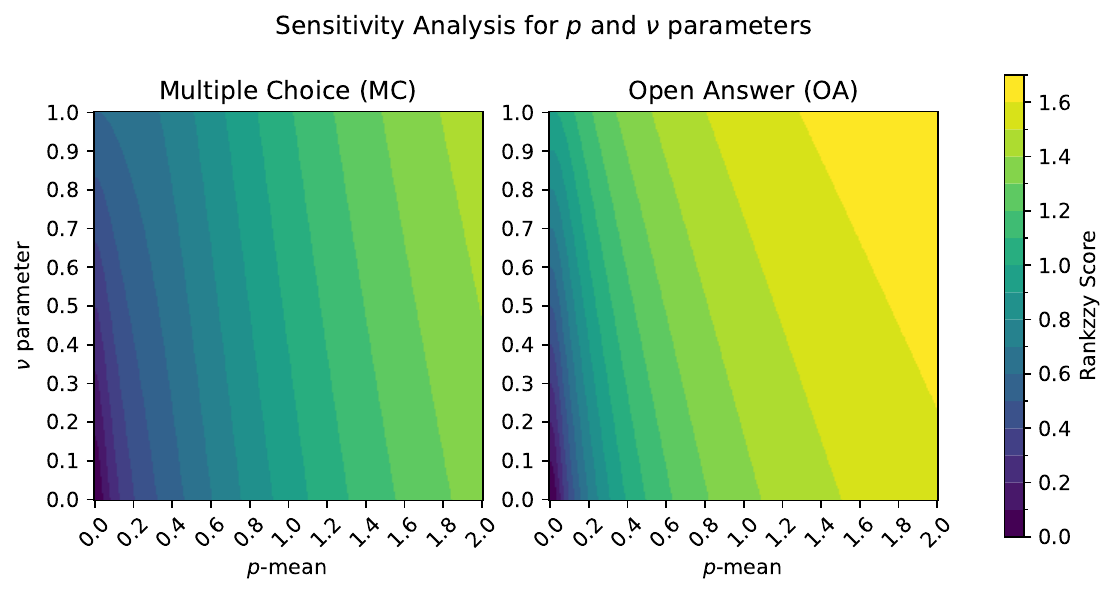}
      \caption{Sensitivity analysis of the $(p,\nu)$ parameters of the $\M$-score of the Rankzzy algorithm for a grid of $250\times250$ items considering $p\in[0,2]$ and $\nu\in[0,1]$.}
      \label{fig:sensitivity_analysis_p_nu}
    \end{figure}
\end{example}

\subsection{Properties of the Rankzzy algorithm}

The definition of the Rankzzy algorithm requires that certain mathematical properties be satisfied. These results are useful for agents because they guarantee the applicability of our method independently of the conditions set by the stakeholders.

\begin{theorem}[Strictly positive normalisation]\label{theorem:normalisation}
    Let $\tilde{X}$ be a fuzzy decision matrix. Then, for all $\varepsilon>0$, the normalised decision matrix obtained in \ref{item:normalisation_FuwMM} satisfies the following:
    \begin{equation}
         \fuzzyzero
         \prec
         \dfrac{\tilde{\boldsymbol{\varepsilon}}}{\tilde{x}_{j}^{\oplus} \ominus \tilde{x}_{j}^{\ominus} \oplus \tilde{\boldsymbol{\varepsilon}}}
        \preceq
        \tilde{r}_{ij}
        \preceq
        \fuzzyone.
    \end{equation}
\end{theorem}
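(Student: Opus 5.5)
The plan is to move the whole statement to the component level. The paper's operations of Definition~\ref{def:fuzzy_operations} and the order $\preceq$ act on the four vertices $(x^L-\alpha^L, x^L, x^R, x^R+\alpha^R)$. So, as the worked example does, I will compute $\tilde{r}_{ij}$ vertex by vertex. I then prove the three inequalities as componentwise (or at least norm-wise) bounds. The final comparison then follows from monotonicity of the Euclidean norm on nonnegative vectors, exactly as in Lemma~\ref{lemma:fuzzy_order_distance}.

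\textbf{Step 1.} For each value $j$, the fuzzy extrema $\tilde{x}_{j}^{\oplus}=\widetilde{\max}_{i}\tilde{x}_{ij}$ and $\tilde{x}_{j}^{\ominus}=\widetilde{\min}_{i}\tilde{x}_{ij}$ are taken vertexwise. For every vertex coordinate $k$ this gives $x^{\ominus}_{j,k}\le x_{ij,k}\le x^{\oplus}_{j,k}$, which is the content of Lemma~\ref{lemma:fuzzy_order_alpha_cut} at $\alpha\in\{0,1\}$.

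\textbf{Step 2.} Take $j\in V_{\max}$; the case $V_{\min}$ is symmetric, swapping the roles of the two extrema in the numerator. Each vertex of the numerator $\tilde{x}_{ij}\ominus\tilde{x}_{j}^{\ominus}\oplus\tilde{\boldsymbol{\varepsilon}}$ lies in the interval
\begin{equation}\notag
[\varepsilon,\ x^{\oplus}_{j,k}-x^{\ominus}_{j,k}+\varepsilon],
\end{equation}
using Step~1 coordinatewise. Each vertex of the denominator $\tilde{x}_{j}^{\oplus}\ominus\tilde{x}_{j}^{\ominus}\oplus\tilde{\boldsymbol{\varepsilon}}$ is therefore at least $\varepsilon>0$, so the division is well defined and every quotient vertex is positive. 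The quotient vertices then satisfy
\begin{equation}\notag
0<\frac{\varepsilon}{d_k}\le r_{ij,k}\le 1,
\end{equation}
where $d_k$ is the $k$-th denominator vertex.

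\textbf{Step 3.} Squaring and summing these nonnegative componentwise bounds gives the chain of norms with respect to $\fuzzyzero$. The strict inequality $\fuzzyzero\prec$ comes from the first vertex being strictly positive. Hence $\fuzzyzero\prec\tilde{\boldsymbol{\varepsilon}}\oslash(\cdot)\preceq\tilde{r}_{ij}\preceq\fuzzyone$ in the vertex preorder.

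\textbf{Main obstacle.} The hard step is the bookkeeping of Step~2 under the paper's arithmetic. Fuzzy subtraction $\ominus$ swaps the left and right components, and $1/\tilde{X}$ swaps and inverts them. So the $k$-th vertex of the quotient is not simply the ratio of the $k$-th vertices of numerator and denominator; it pairs a numerator vertex with an opposite denominator vertex. To handle this, I will write the quotient $\tilde{X}_2\otimes 1/\tilde{X}_1$ explicitly. Then I will verify that each paired ratio still has the form (numerator term) over (a denominator term that dominates it). This needs the monotonicity of the extrema on the swapped side, for example $x_{ij}^L-x^{\ominus,R}_j$ versus $x^{\oplus,R}_j-x^{\ominus,L}_j$. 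If an exact componentwise bound fails for some vertex, I will fall back to showing only the norm inequality, since $\preceq$ only compares $\|\cdot\|_2$. I expect this fallback to be needed mainly for the upper bound $\preceq\fuzzyone$; the lower bound is immediate because the numerator dominates $\tilde{\boldsymbol{\varepsilon}}$ vertexwise.
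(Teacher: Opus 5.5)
\textbf{Gap.} Your Steps 1--3 are correct only if $\ominus$ and $\oslash$ act on matching vertices, or equivalently if the column extremes are crisp. Once you adopt the swapping rules you describe under \emph{Main obstacle}, Step~2 fails. No norm-wise fallback can repair it, because the inequality itself becomes false. Write $D=\tilde{x}_{j}^{\oplus}\ominus\tilde{x}_{j}^{\ominus}\oplus\tilde{\boldsymbol{\varepsilon}}$ with vertices $d_1\le d_2\le d_3\le d_4$. With swapping, $d_1=x^{\oplus}_{j,1}-x^{\ominus}_{j,4}+\varepsilon$ need not be positive. In the running example's fairness column it equals $0.510-0.622+\varepsilon<0$, so the quotient is not even defined there. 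The lowest numerator vertex $x_{ij,1}-x^{\ominus}_{j,4}+\varepsilon$ is negative whenever $\tilde{x}_{ij}=\tilde{x}_{j}^{\ominus}$ has support wider than $\varepsilon$. So the lower bound is not ``immediate'' vertexwise either. Worst, the upper bound is false. Suppose one action dominates column $j$ vertexwise and $d_1>0$. Its normalised entry is then $D\oslash D$, with vertices $(d_1/d_4,\,d_2/d_3,\,d_3/d_2,\,d_4/d_1)$. Since $t^2+t^{-2}\ge 2$, this gives $\|D\oslash D\|_2^2\ge 4=\|\fuzzyone\|_2^2$, with equality only when $D$ is crisp. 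Concretely, take $j\in V_{\max}$ with $\tilde{x}_{1j}=(1,1.1,1.2,1.3)$ and $\tilde{x}_{2j}=(3,3.1,3.2,3.3)$. Then $D\approx(1.7,1.9,2.1,2.3)$ and $\|\tilde{r}_{2j}\|_2^2\approx 4.42>4$, so $\tilde{r}_{2j}\not\preceq\fuzzyone$.

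The missing ingredient is a commitment to how the operations are read, and the paper's proof makes that commitment without saying so. It treats the normalisation $\psi$ as an increasing affine map and evaluates $\psi(\tilde{x}_{j}^{\ominus})=\tilde{\boldsymbol{\varepsilon}}\oslash D$ and $\psi(\tilde{x}_{j}^{\oplus})=\fuzzyone$. That presupposes the cancellations $\tilde{x}_{j}^{\ominus}\ominus\tilde{x}_{j}^{\ominus}=\fuzzyzero$ and $D\oslash D=\fuzzyone$, which hold only for the non-swapping computation. The running example computes exactly this: the fairness entries of Table~\ref{t:norm_matrix_random_example} are $(x-0.132+\varepsilon)/(0.946-0.132+\varepsilon)$ applied to each vertex, i.e.\ with crisp extremes $\min_i x_{ij,1}$ and $\max_i x_{ij,4}$. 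Make that reading explicit and drop the \emph{Main obstacle} plan. The crisp-extremes version is preferable, since it also keeps $\tilde{r}_{ij}$ a genuine trapezoid. Your Steps 1--3 then go through verbatim. They are in fact tighter than the paper's argument: you bound every vertex directly, rather than appealing to $\preceq$-monotonicity of an affine map, and translations do not preserve comparisons of Euclidean norms in general.
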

\begin{proof} 
    According to Lemma~\ref{lemma:fuzzy_order_distance}, we know that $\preceq$ is more restrictive than $\lesssim$, so we can demonstrate the statement considering it. We just proved the $V_{\max}$ case since the other one is analogous. The fuzzy normalisation function used in \ref{item:normalisation_FuwMM} is defined as an affine transformation. In fact, we note that such a transformation is $\preceq$-increasing because $\tilde{x}_{j}^{\oplus} \ominus \tilde{x}_{j}^{\ominus} \oplus \tilde{\boldsymbol{\varepsilon}} \succeq \fuzzyzero$. Hence, the minimum and maximum values regarding $\preceq$ match with the image of $\tilde{x}_{j}^{\ominus}$ and $\tilde{x}_{j}^{\oplus}$ respectively. If we consider $\psi:\mathscr{X} \to \mathscr{X}$ as the fuzzy normalisation in (\ref{eq:normalisation}), we obtain the following:
    \begin{equation}
        \left\{
        \begin{array}{rl}
             \displaystyle\psi\left(\tilde{x}_{j}^{\ominus}\right) = &\dfrac{\tilde{\boldsymbol{\varepsilon}}}{\tilde{x}_{j}^{\oplus} \ominus \tilde{x}_{j}^{\ominus} \oplus \tilde{\boldsymbol{\varepsilon}}}, \vspace{1mm}\\
             \displaystyle\psi\left(\tilde{x}_{j}^{\oplus} \right) = & \fuzzyone.
        \end{array}
        \right.
    \end{equation}
\end{proof}

As a consequence of Theorems~\ref{theorem:existence}, \ref{theorem:monotony}, and \ref{theorem:normalisation}, we can state that Rankzzy is mathematically consistent for every FUW-VBDM scenario. Then, we reduce a complex and time-consuming task from the decision-makers' hands. Now they just have to consider the agents, actions, and values involved (\ref{s:decision_matrix}), determine the weighting space (\ref{s:domain_weights}) by consensus, and assess the type of approach to select the $p$ accordingly.



\section{Evaluation}\label{s:results}

The evaluation process of the Rankzzy algorithm for solving FUW-VBDM problems has been addressed from two perspectives. First, we have studied the execution time of our algorithm in real-sized and large value-based decision problems. 
Second, we have analysed the robustness of our framework when performing the ranking systems. To this end, we have evaluated it in terms of rank correlation.


\begin{figure}[ht!]
  \centering
  \includegraphics[width=\linewidth]{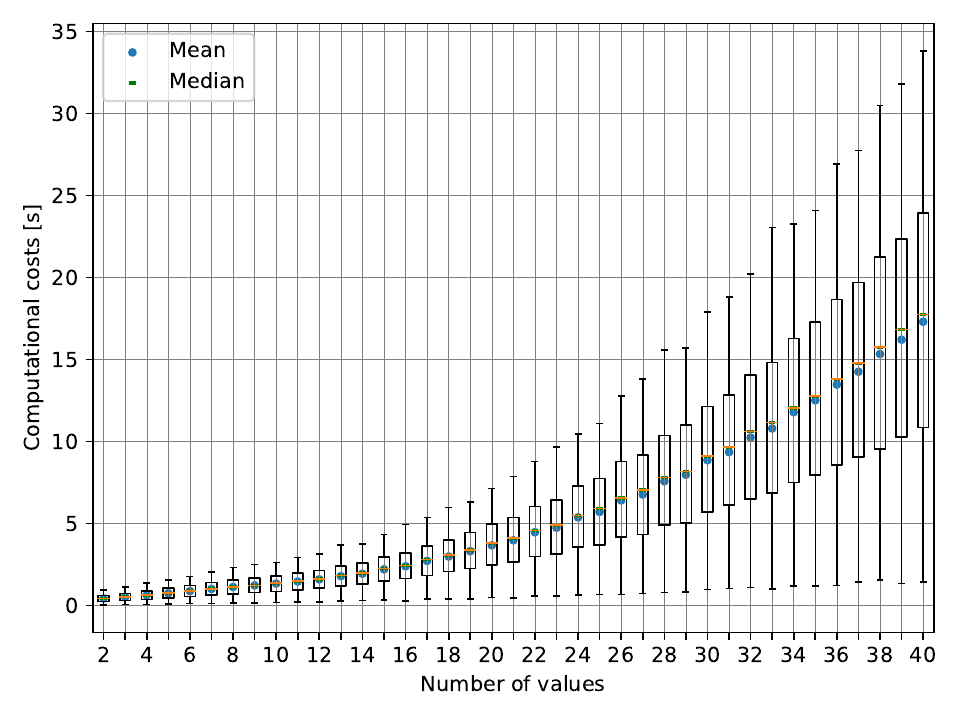}
  \caption{Computational costs attached to the implementation of Rankzzy over $50$ experiments enlarging the size of the decision matrix from $2\times 2$ to $50\times40$.}
  \label{fig:consistency}
\end{figure}

\begin{figure}[ht!]
  \centering
  \includegraphics[width=\linewidth]{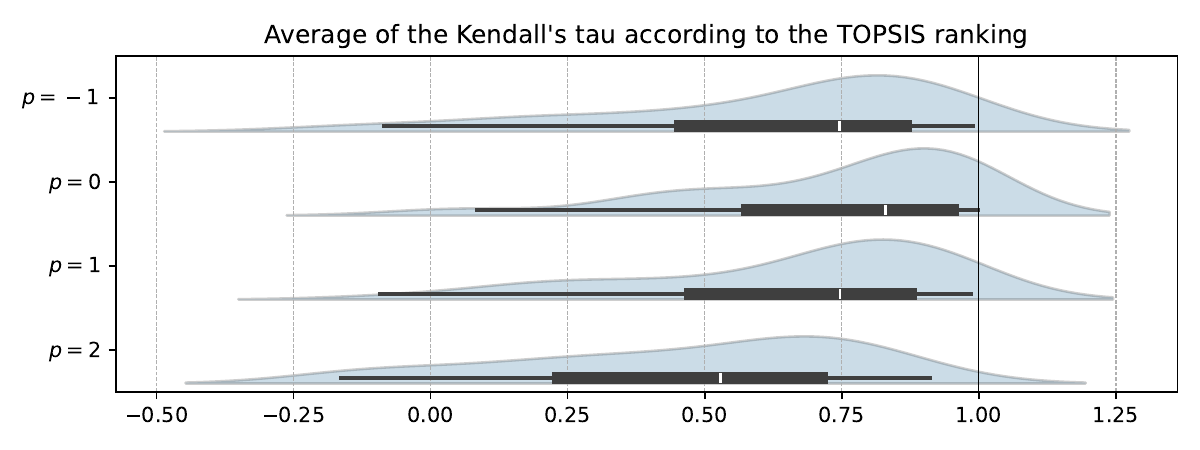}
  \caption{Distribution of the Kendall's $\tau$ correlation of the Rankzzy and TOPSIS ordinal placement considering $p\in\{-1,0,1,2\}$ over $50$ experiments.}
  \label{fig:kendall}
\end{figure}

\subsection{Computational costs}

We have run our algorithm in 50 randomised experiments considering the arithmetic aggregation in (\ref{eq:FWMM}). In each experiment, we set a $2\times 2$ fuzzy random decision matrix and then we calculate the time it takes to implement FUW-VBDM when increasing the matrix's size up to $50\times 40$. In Fig.~\ref{fig:consistency}, we have depicted the execution times of performing the ranking ($\preccurlyeq_{\M}$) associated with the number of values in the decision problem.

It should be noted that the application of our framework takes, on average, less than 5 seconds when we have at most 23 values. When between 24 and 31 values are present, the mean time remains below 10 seconds. In the worst-case scenario, a decision problem of $50\times 40$ size took 34.19 seconds to be solved.

\subsection{Rank correlation}

For this purpose, we have compared our algorithm's rankings with the TOPSIS technique ranking \cite{Hwang1981}. We have decided to use this decision-making algorithm because it is one of the SOTA techniques in MCDM \cite{Shih2022} and represents human choice's rationale through logical thinking \cite{Kaliszewski2016}. In this particular section, a direct comparison with other VBDM approaches has been deliberately avoided, as the primary objective is to assess the correlation of the complete ranking rather than the optimal solution obtained. In this experiment, we proceed with a $5\times 4$ random decision matrix ${X}$ with random optimal criteria $V_{\max}$ and $V_{\min}$. We subsequently fuzzify it by adding a random core and spread to get $\tilde{X}$. From the matrix $\tilde{X}$, we generate the noisy convex combination of the ideal solutions as follows:
\begin{equation}\label{eq:z}
    \tilde{z}_{\lambda} = 
    (1-\lambda)\odot\widetilde{\text{PIS}} \oplus \lambda\odot\widetilde{\text{NIS}}
    \oplus\lambda(1-\lambda)\odot\tilde{\epsilon},
\end{equation}
\begin{equation}\label{eq:pis_nis}
    \begin{array}{ccc}
        \widetilde{\text{PIS}}_{j} = 
        \begin{cases}
            \displaystyle\widetilde{\max_{i\in A}}\{\tilde{x}_{ij}\} & \hbox{if } j\in V_{\max}, \\
            \displaystyle\widetilde{\min_{i\in A}}\{\tilde{x}_{ij}\} & \hbox{if } j\in V_{\min},
        \end{cases}
        & 
        \hbox{and}
        &
        \widetilde{\text{NIS}}_{j} = 
        \begin{cases}
            \displaystyle\widetilde{\min_{i\in A}}\{\tilde{x}_{ij}\} & \hbox{if } j\in V_{\min}, \\
            \displaystyle\widetilde{\max_{i\in A}}\{\tilde{x}_{ij}\} & \hbox{if } j\in V_{\max},
        \end{cases}
    \end{array}
\end{equation}

so that $\lambda\in[0,1]$ is the parameter that generates the convex combination between the ideal solutions. The randomness is convexly added through the error term $\epsilon\sim\mathcal{N}(\bar{\tilde{x}},\sigma_{\tilde{x}})$, being $\bar{\tilde{x}}$ and $\sigma_{\tilde{x}}$ the mean and the standard deviation of the decision matrix respectively. The randomness is incorporated by employing the quadratic term $\lambda(1-\lambda)$ to ensure zero error when the $\tilde{z}_{\lambda}$ element approximates to $\widetilde{\text{PIS}}$ or $\widetilde{\text{NIS}}$ and higher error when the $\lambda$ term takes values close to $0.5$.

As classical TOPSIS is not designed to operate with fuzzy numbers, we applied the algorithm directly to $[X; z_{\lambda}]$, where $z_{\lambda}$ is the crisp version of (\ref{eq:z}) to $X$ and $[\cdot;\cdot]$ the matrix horizontal concatenation operator. For Rankzzy, we compute it to $[\tilde{X}; \tilde{z}_{\lambda}]$ so that it reflects the fuzziness of the problem. The rank correlation has been measured through Kendall's $\tau$ coefficient. In this manner, we can measure the degree of similarity between TOPSIS and Rankzzy rankings. If we consider $R$ and $T$ as the ordinal rankings of Rankzzy and TOPSIS methods, Kendall's score is computed as follows:

\begin{equation}
    \displaystyle\tau(R, T) =
    \frac{2}{N(N-1)}\sum_{i<j} \sign\left(R_{i} - R_{j}\right) \sign\big(T_{i} - T_{j}\big).
\end{equation}

Apart from studying the version utilised for the previous example, we have also considered the harmonic, geometric, and quadratic aggregations in (\ref{eq:FWMM}); i.e. we consider $p\in\{-1,0,1,2\}$. In this manner, we evaluate the impact of the Pythagorean means for our aggregation system.

In Fig.~\ref{fig:kendall} we have displayed the $\tau$ correlation distribution between the TOPSIS ranking and the output rankings grouped by the $p$ aggregations. Except for the quadratic mean, we have obtained a median of the $\tau$ rank correlation higher than 0.745, thus indicating a strong correlation regarding the TOPSIS technique. The explanation that the $p=2$ case of our model does not correlate as well as the others is due to the fact that TOPSIS uses the relative distances to the ideals, but $\M^{2}$ uses the fuzzy norm to the origin. For the $p=0$ case, the strong correlation is justified because the geometric mean indicates the central tendency of the sample values for each action.

\section{Conclusions}\label{s:conclusions}

As intelligent decision support systems are increasingly incorporated into critical domains, the challenge of aligning their outputs with human values becomes both urgent and unresolved. In this work, we introduce the Fuzzy Unweighted Value-Based Decision-Making framework for addressing VBDM problems. Our solution involves an integrated end-to-end MAS approach to cover the non-trivial steps of the problem and guide decision-makers in the process of selecting value-aligned actions. One novelty of our framework is the data-driven assembly of the decision matrix by simultaneously combining qualitative and quantitative criteria using LR-fuzzy numbers. This offers a more extensive methodology to decision-makers since they can explore more options when facing VBDM problems. Another novelty is the criteria assessment utilising fuzzy intervals instead of fixed weighting schemes. Thus, we address the subjectivity and the uncertainty attached to the MAS evaluation. Lastly, we incorporated an adaptable score function defined on the domain of weights for generating rankings of actions, thus meeting the user-specified demands. The selection of this score not only guarantees a correct aggregation of the values but also a mathematical consistency that eliminates any kind of indeterminacy on the part of our framework. For this, we have stated and proven two theorems that show these properties.

In order to provide a solution to the problem posed by the FUW-VBDM framework, we introduced the Rankzzy method. It makes use of the $\tilde{\M}^{p}$-score function to set the preferences among actions through the generated ranking, where decision-makers can modify the parameters to determine a particular approach based on their requirements. The algorithmic implementation of Rankzzy consists of five steps that facilitate the understanding and transparency of our method. The output ranking is driven by optimisation, both minimising and maximising, based on the fuzzy order relation. The output ranking is guided by the optimisation, both minimising and maximising, based on the $\preceq$ fuzzy order. Then, Rankzzy subsequently aggregates these scores on the same basis that $\tilde{\M}^{p}$ is generated, thus guaranteeing the same decision-making standard.

The applicability and usefulness of our framework have been shown in a running example to standardise exam methods in a university context with a detailed step-by-step implementation. With the FUW-VBDM perspective, we can effectively model the decision scenario and align the result with qualitative value-based criterion (fairness) and a quantitative resource-based attribute (cost). In Tables \ref{t:norm_matrix_random_example}, \ref{t:scores_random_example}, and \ref{t:weights_random_example}, we have displayed the numerical results to improve the readability and the follow-up of the calculations. As we indicated, the Rankzzy application can be fully customised to fit the decision problem.

For the evaluation stage, Rankzzy has proven effective in addressing value-based decision-making problems. It has not only shown a reasonable computational cost for dealing with FUW-VBDM problems but has also shown high effectiveness in solving them. First, we showed that decision problems stated with 40 criteria can be run in less than 18 seconds on average. Within the random sampling, the worst result in terms of computational cost was less than 35 seconds for a problem consisting of 50 actions and 40 attributes. Second, we compared the Rankzzy performance against TOPSIS in terms of rank correlation. To this end, we formulated a crisp problem and then we fuzzified it to get the expected fuzzy decision matrix of the FUW-VBDM statement. When considering the Pythagorean means for our $\tilde{\M}^{p}$-score, the results indicated a good performance of the output ranking, based on Kendall's Tau coefficient, despite the added uncertainty to the problem.

On the basis of the contributions and the results obtained, this work opens avenues for future investigation into the VBDM domain. From the reduced computational cost and the ranking performance of Rankzzy, future work should explore their applicability in high-dimensional case studies. Similarly, it would also be interesting to test its usefulness when applied in autonomous MAS with real-time problems.

\bibliographystyle{apalike}
\bibliography{references}

@article{Kaliszewski2016,
title = {Simple additive weighting—A metamodel for multiple criteria decision analysis methods},
journal = {Expert Systems with Applications},
volume = {54},
pages = {155-161},
year = {2016},
issn = {0957-4174},
doi = {https://doi.org/10.1016/j.eswa.2016.01.042},
url = {https://www.sciencedirect.com/science/article/pii/S0957417416000658},
author = {I. Kaliszewski and D. Podkopaev}
}

@book{Hwang1981,
    address = {Berlin, Heidelberg},
    author = {Hwang, Ching-Lai and Yoon, Kwangsun},
    doi = {10.1007/978-3-642-48318-9},
    isbn = {978-3-642-483},
    publisher = {Springer Berlin Heidelberg},
    series = {Lecture Notes in Economics and Mathematical Systems},
    title = {{Multiple Attribute Decision Making}},
    volume = {186},
    year = {1981}
}

@book{Triantaphyllou2000,
    author = {Triantaphyllou, Evangelos},
    year = {2000},
    title = {{Multi-Criteria Decision Making Methods: A Comparative Study}},
    booktitle = {Multi-Criteria Decision Making Methods: A Comparative Study},
    publisher = {Springer New York, NY},
    doi = {10.1007/978-1-4757-3157-6},
    isbn = {978-1-4419-4838-0},
    address   = {~},
    pages = {},
    volume = {44},
}

@book{Doumpos2013,
    author = {Doumpos, Michael and Grigoroudis, Evangelos},
    year = {2013},
    address   = {~},
    title = {Intelligent Decision Support Systems},
    booktitle = {Multicriteria Decision Aid and Artificial Intelligence},
    publisher = {John Wiley \& Sons, Ltd},
    doi = {10.1002/9781118522516},
    isbn = {9781119976394},
}

@book{Roy1996,
  title={Multicriteria methodology for decision aiding},
  author={Roy, Bernard},
  address = {~},
  volume={12},
  year={1996},
  doi={10.1007/978-1-4757-2500-1},
  isbn={978-1-4757-2500-1},
  issn={1571-568X},
  publisher={Springer Science \& Business Media}
}

@article{Liern2020,
    author = {Liern, V. and P{\'{e}}rez-Gladish, B.},
    doi = {10.1007/S10479-020-03718-1},
    issn = {1572-9338},
    number = {~},
    volume = {~},
    journal = {Annals of Operations Research 2020},
    pages = {1--23},
    publisher = {Springer},
    title = {{Multiple criteria ranking method based on functional proximity index: un-weighted TOPSIS}},
    year = {2020}
}

@article{Bouslah2023,
author = {Bouslah, K. and Liern, V. and Ouenniche, J. and P{\'e}rez-Gladish, B.},
title = {Ranking firms based on their financial and diversity performance using multiple-stage unweighted TOPSIS},
journal = {International Transactions in Operational Research},
volume = {30},
number = {5},
pages = {2485-2505},
doi = {https://doi.org/10.1111/itor.13143},
url = {https://onlinelibrary.wiley.com/doi/abs/10.1111/itor.13143},
eprint = {https://onlinelibrary.wiley.com/doi/pdf/10.1111/itor.13143},
year = {2023}
}

@Book{Shih2022,
    author="Shih, Hsu-Shih",
    title="TOPSIS Basics",
    bookTitle="TOPSIS and its Extensions: A Distance-Based MCDM Approach",
    year="2022",
    publisher="Springer International Publishing",
    address="Cham",
    pages="17--31",
    isbn="978-3-031-09577-1",
    doi="10.1007/978-3-031-09577-1_2",
}

@article{Kahraman2003,
title = {Fuzzy group decision-making for facility location selection},
journal = {Information Sciences},
volume = {157},
pages = {135-153},
year = {2003},
issn = {0020-0255},
doi = {10.1016/S0020-0255(03)00183-X},
author = {Cengiz Kahraman and Da Ruan and Ibrahim Doǧan},
}

@Inbook{Kahraman2006,
author="Kahraman, Cengiz
and G{\"u}lbay, Murat
and Kabak, {\"O}zg{\"u}r",
title="Applications of Fuzzy Sets in Industrial Engineering: A Topical Classification",
bookTitle="Fuzzy Applications in Industrial Engineering",
chapter="1",
year="2006",
publisher="Springer Berlin Heidelberg",
address="Berlin, Heidelberg",
pages="1--55",
isbn="978-3-540-33517-7",
doi="10.1007/3-540-33517-X_1",
}

@incollection{Reyna2009,
title = {Chapter 7 Development and Dual Processes in Moral Reasoning: A Fuzzy‐trace Theory Approach},
editor = {Brian H. Ross},
booktitle = {Psychology of Learning and Motivation},
address = {~},
publisher = {Academic Press},
volume = {50},
pages = {207-236},
year = {2009},
issn = {0079-7421},
doi = {10.1016/S0079-7421(08)00407-6},
author = {Valerie F. Reyna and Wanda Casillas}
}

@article{Zadeh1965,
    title = {Fuzzy sets},
    journal = {Information and Control},
    volume = {8},
    number = {3},
    pages = {338-353},
    year = {1965},
    issn = {0019-9958},
    doi = {10.1016/S0019-9958(65)90241-X},
    author = {L.A. Zadeh}
}

@article{Dubois1978,
author = {Dubois, Didier and Henri Prade},
title = {Operations on fuzzy numbers},
journal = {International Journal of Systems Science},
volume = {9},
number = {6},
pages = {613-626},
year  = {1978},
publisher = {Taylor & Francis},
doi = {10.1080/00207727808941724},
}

@book{Zimmermann2001,
  title={Fuzzy set theory and its applications},
  author={H.-J. Zimmermann},
  address   = {~},
  ISBN = {978-0-7923-7435-0},
  year={2001},
  doi = {10.1007/978-94-010-0646-0},
  publisher={Springer Dordrecht}
}

@book{deBarros2017,
author="de Barros, La{\'e}cio Carvalho
and Bassanezi, Rodney Carlos
and Lodwick, Weldon Alexander",
title="The Extension Principle of Zadeh and Fuzzy Numbers",
bookTitle="A First Course in Fuzzy Logic, Fuzzy Dynamical Systems, and Biomathematics: Theory and Applications",
year="2017",
publisher="Springer Berlin Heidelberg",
address="Berlin, Heidelberg",
pages="23--41",
isbn="978-3-662-53324-6",
doi="10.1007/978-3-662-53324-6_2",
url="https://doi.org/10.1007/978-3-662-53324-6_2"
}

@book{Bojadziev1996,
author = {Bojadziev, George and Bojadziev, Maria},
title = {Fuzzy Sets, Fuzzy Logic, Applications},
publisher = {WORLD SCIENTIFIC},
year = {1996},
doi = {10.1142/2867},
edition   = {},
address   = {~},
}

@book{Klir1988,
  title={Fuzzy Sets, Uncertainty, and Information},
  author={Klir, G.J. and Folger, T.A.},
  isbn={9780133459845},
  address   = {~},
  lccn={87006907},
  series={Prentice-Hall International editions},
  year={1988},
  publisher={Prentice Hall}
}

@article{Kahraman2015,
author = {Cengiz Kahraman and Sezi Cevik Onar and Basar Oztaysi},
title = {Fuzzy Multicriteria Decision-Making: A Literature Review},
journal = {International Journal of Computational Intelligence Systems},
volume = {8},
number = {4},
pages = {637-666},
year  = {2015},
publisher = {Taylor & Francis},
doi = {10.1080/18756891.2015.1046325},
}

@article{Xu2016,
title = {Information fusion for intuitionistic fuzzy decision making: An overview},
journal = {Information Fusion},
volume = {28},
pages = {10-23},
year = {2016},
issn = {1566-2535},
doi = {10.1016/j.inffus.2015.07.001},
author = {Zeshui Xu and Na Zhao}
}

@article{French1995,
author = {Simon French},
title = {Uncertainty and Imprecision: Modelling and Analysis},
journal = {Journal of the Operational Research Society},
volume = {46},
number = {1},
pages = {70-79},
year  = {1995},
publisher = {Taylor & Francis},
doi = {10.1057/jors.1995.8},
}

@article{Payzan2013,
title = {The Neural Representation of Unexpected Uncertainty during Value-Based Decision Making},
journal = {Neuron},
volume = {79},
number = {1},
pages = {191-201},
year = {2013},
issn = {0896-6273},
doi = {10.1016/j.neuron.2013.04.037},
author = {Elise Payzan-LeNestour and Simon Dunne and Peter Bossaerts and John P. O’Doherty},
}

@ARTICLE{Bach2011,
author = {Bach, Dominik R. and Hulme, Oliver and Penny, William D. and Dolan, Raymond J.},
title = {The known unknowns: Neural representation of second-order uncertainty, and ambiguity},
year = {2011},
journal = {Journal of Neuroscience},
volume = {31},
number = {13},
pages = {4811 – 4820},
doi = {10.1523/JNEUROSCI.1452-10.2011},
publication_stage = {Final},
source = {Scopus},
}

@article{Mardani2015,
title = {Fuzzy multiple criteria decision-making techniques and applications – Two decades review from 1994 to 2014},
journal = {Expert Systems with Applications},
volume = {42},
number = {8},
pages = {4126-4148},
year = {2015},
issn = {0957-4174},
doi = {10.1016/j.eswa.2015.01.003},
author = {Abbas Mardani and Ahmad Jusoh and Edmundas Kazimieras Zavadskas}
}

@article{Ramik1985,
title = {Inequality relation between fuzzy numbers and its use in fuzzy optimization},
journal = {Fuzzy Sets and Systems},
volume = {16},
number = {2},
pages = {123-138},
year = {1985},
issn = {0165-0114},
doi = {10.1016/S0165-0114(85)80013-0},
author = {Jaroslav Ram{\'i}k and Josef {\'i}m{\'a}nek}
}

@article{Chen2000,
title = {Extensions of the TOPSIS for group decision-making under fuzzy environment},
journal = {Fuzzy Sets and Systems},
volume = {114},
number = {1},
pages = {1-9},
year = {2000},
issn = {0165-0114},
doi = {10.1016/S0165-0114(97)00377-1},
author = {Chen-Tung Chen}
}

@article{Abbasbandy2009,
title = {A new approach for ranking of trapezoidal fuzzy numbers},
journal = {Computers \& Mathematics with Applications},
volume = {57},
number = {3},
pages = {413-419},
year = {2009},
issn = {0898-1221},
doi = {10.1016/j.camwa.2008.10.090},
author = {S. Abbasbandy and T. Hajjari}
}

@article{Guh2008,
title = {The fuzzy weighted average within a generalized means function},
journal = {Computers \& Mathematics with Applications},
volume = {55},
number = {12},
pages = {2699-2706},
year = {2008},
issn = {0898-1221},
doi = {10.1016/j.camwa.2007.09.009},
author = {Yuh-Yuan Guh and Rung-Wei Po and E. Stanley Lee}
}

@article{Opricovic2011,
title = {Fuzzy VIKOR with an application to water resources planning},
journal = {Expert Systems with Applications},
volume = {38},
number = {10},
pages = {12983-12990},
year = {2011},
issn = {0957-4174},
doi = {10.1016/j.eswa.2011.04.097},
author = {Serafim Opricovic}
}

@article{Fu2008,
title = {A fuzzy optimization method for multicriteria decision making: An application to reservoir flood control operation},
journal = {Expert Systems with Applications},
volume = {34},
number = {1},
pages = {145-149},
year = {2008},
issn = {0957-4174},
doi = {https://doi.org/10.1016/j.eswa.2006.08.021},
url = {https://www.sciencedirect.com/science/article/pii/S0957417406002636},
author = {Guangtao Fu}
}

@article{Jacquet1982,
title = {Assessing a set of additive utility functions for multicriteria decision-making, the UTA method},
journal = {European Journal of Operational Research},
volume = {10},
number = {2},
pages = {151-164},
year = {1982},
issn = {0377-2217},
doi = {10.1016/0377-2217(82)90155-2},
author = {E. Jacquet-Lagreze and J. Siskos}
}

@article{Watrobski2019,
title = {Generalised framework for multi-criteria method selection},
journal = {Omega},
volume = {86},
pages = {107-124},
year = {2019},
issn = {0305-0483},
doi = {10.1016/j.omega.2018.07.004},
author = {Jarosław Wątr{\'o}bski and Jarosław Jankowski and Paweł Ziemba and Artur Karczmarczyk and Magdalena Zioło}
}

@article{Ouenniche2018,
title = {An out-of-sample framework for TOPSIS-based classifiers with application in bankruptcy prediction},
journal = {Technological Forecasting and Social Change},
volume = {131},
pages = {111-116},
year = {2018},
issn = {0040-1625},
doi = {10.1016/j.techfore.2017.05.034},
author = {Jamal Ouenniche and Blanca P{\'e}rez-Gladish and Kais Bouslah}
}

@Article{Afsordegan2016,
author={Afsordegan, A.
and S{\'a}nchez, M.
and Agell, N.
and Zahedi, S.
and Cremades, L. V.},
title={Decision making under uncertainty using a qualitative TOPSIS method for selecting sustainable energy alternatives},
journal={International Journal of Environmental Science and Technology},
year={2016},
month={Jun},
day={01},
volume={13},
number={6},
pages={1419-1432},
issn={1735-2630},
doi={10.1007/s13762-016-0982-7},
}

@article{Pei2019,
title = {FLM-TOPSIS: The fuzzy linguistic multiset TOPSIS method and its application in linguistic decision making},
journal = {Information Fusion},
volume = {45},
pages = {266-281},
year = {2019},
issn = {1566-2535},
doi = {https://doi.org/10.1016/j.inffus.2018.01.013},
author = {Zheng Pei and Jing Liu and Fei Hao and Bin Zhou},
}

@article{REZAEI2015,
title = {Best-worst multi-criteria decision-making method},
journal = {Omega},
volume = {53},
pages = {49-57},
year = {2015},
issn = {0305-0483},
doi = {https://doi.org/10.1016/j.omega.2014.11.009},
url = {https://www.sciencedirect.com/science/article/pii/S0305048314001480},
author = {Jafar Rezaei},
}

@Article{LopezGarcia2023,
author={L{\'o}pez-Garc{\'i}a, A.
and Liern, V.
and P{\'e}rez-Gladish, B.},
title={Determining the underlying role of corporate sustainability criteria in a ranking problem using UW-TOPSIS},
journal={Annals of Operations Research},
year={2023},
month={Aug},
day={22},
issn={1572-9338},
doi={10.1007/s10479-023-05543-8},
url={https://doi.org/10.1007/s10479-023-05543-8}
}

@inproceedings{LopezGarcia2024,
author={Aar{\'o}n L{\'o}pez{-}Garc{\'i}a},
title={A Proposal for Selecting the Most Value-Aligned Preferences in Decision-Making Using Agreement Solutions},
booktitle={Proceedings of the 16th International Conference on Agents and Artificial Intelligence - Volume 1: EAA},
year={2024},
pages={461-470},
publisher={SciTePress},
organization={INSTICC},
doi={10.5220/0012586300003636},
isbn={978-989-758-680-4},
issn={2184-433X},
}

@article{HOSSEINIDEHSHIRI2022,
title = {A novel group BWM approach to evaluate the implementation criteria of blockchain technology in the automotive industry supply chain},
journal = {Expert Systems with Applications},
volume = {198},
pages = {116826},
year = {2022},
issn = {0957-4174},
doi = {https://doi.org/10.1016/j.eswa.2022.116826},
url = {https://www.sciencedirect.com/science/article/pii/S0957417422002822},
author = {Seyyed Jalaladdin {Hosseini Dehshiri} and Mir Seyed Mohammad Mohsen Emamat and Maghsoud Amiri}
}

@article{TAVANA2024,
title = {A novel fuzzy scenario-based stochastic general best-worst method},
journal = {Expert Systems with Applications},
volume = {252},
pages = {124246},
year = {2024},
issn = {0957-4174},
doi = {https://doi.org/10.1016/j.eswa.2024.124246},
url = {https://www.sciencedirect.com/science/article/pii/S0957417424011126},
author = {Madjid Tavana and Shahryar Sorooshian and Homa Rezaei and Hassan Mina}
}

@article{Hajiaghaei2023,
title = {Pythagorean Fuzzy TOPSIS Method for Green Supplier Selection in the Food Industry},
journal = {Expert Systems with Applications},
volume = {224},
pages = {120036},
year = {2023},
issn = {0957-4174},
doi = {https://doi.org/10.1016/j.eswa.2023.120036},
url = {https://www.sciencedirect.com/science/article/pii/S0957417423005389},
author = {Mostafa Hajiaghaei-Keshteli and Zeynep Cenk and Babek Erdebilli and Yavuz {Selim Özdemir} and Fatemeh Gholian-Jouybari}
}

@Article{Goldani2024,
author={Goldani, Nastaran
and Ishizaka, Alessio},
title={A hybrid fuzzy multi-criteria group decision-making method and its application to healthcare waste treatment technology selection},
journal={Annals of Operations Research},
year={2024},
month={May},
day={16},
issn={1572-9338},
doi={10.1007/s10479-024-06036-y},
url={https://doi.org/10.1007/s10479-024-06036-y}
}

@Article{Bas2024,
author={Bas, Mar{\'i}a C.
and Bol{\'o}s, Vicente J.
and Prieto, {\'A}lvaro E.
and Rodr{\'i}guez-Echeverr{\'i}a, Roberto
and S{\'a}nchez-Figueroa, Fernando},
title={A multi-criteria decision support system to evaluate the effectiveness of training courses on citizens' employability},
journal={Applied Intelligence},
year={2024},
month={Nov},
day={30},
volume={55},
number={1},
pages={57},
issn={1573-7497},
doi={10.1007/s10489-024-05967-0},
url={https://doi.org/10.1007/s10489-024-05967-0}
}

@inproceedings{Arias2024,
author = {Arias, Joaqu\'{\i}n and Moreno-Rebato, Mar and Rodriguez-Garcia, Jose Antonio and Ossowski, Sascha},
title = {Towards value-awareness in administrative processes: an approach based on constraint answer set programming},
year = {2024},
isbn = {9798400702433},
publisher = {Association for Computing Machinery},
address = {New York, NY, USA},
url = {https://doi.org/10.1145/3605098.3636022},
doi = {10.1145/3605098.3636022},
booktitle = {Proceedings of the 39th ACM/SIGAPP Symposium on Applied Computing},
pages = {770–778},
numpages = {9},
location = {Avila, Spain},
series = {SAC '24}
}

@inproceedings{Osman2024,
           pages = {1531 --1539},
       booktitle = {AAMAS '23: International Conference on Autonomous Agents and Multiagent Systems},
         address = {New York},
            year = {2024},
       publisher = {Association for Computing Machinery (ACM)},
          series = {AAMAS '24: Proceedings of the 23rd International Conference on Autonomous Agents and Multiagent Systems},
             doi = {10.5555/3635637.3663013},
           month = {May},
           title = {A Computational Framework of Human Values},
            isbn = {9798400704864},
          author = {Osman, Nardine and d'Inverno, Mark},
             url = {https://dl.acm.org/doi/10.5555/3635637.3663013}
}

@inproceedings{shen2025,
    title = "{V}alue{C}ompass: A Framework for Measuring Contextual Value Alignment Between Human and {LLM}s",
    author = "Shen, Hua  and
      Knearem, Tiffany  and
      Ghosh, Reshmi  and
      Yang, Yu-Ju  and
      Clark, Nicholas  and
      Mitra, Tanu  and
      Huang, Yun",
    editor = "Zhang, Chen  and
      Allaway, Emily  and
      Shen, Hua  and
      Miculicich, Lesly  and
      Li, Yinqiao  and
      M'hamdi, Meryem  and
      Limkonchotiwat, Peerat  and
      Bai, Richard He  and
      T.y.s.s., Santosh  and
      Han, Sophia Simeng  and
      Thapa, Surendrabikram  and
      Rim, Wiem Ben",
    booktitle = "Proceedings of the 9th Widening NLP Workshop",
    month = nov,
    year = "2025",
    address = "Suzhou, China",
    publisher = "Association for Computational Linguistics",
    url = "https://aclanthology.org/2025.winlp-main.15/",
    doi = "10.18653/v1/2025.winlp-main.15",
    pages = "75--86",
    ISBN = "979-8-89176-351-7"
}

@incollection{Roth1992,
title = {Chapter 16 Two-sided matching},
booktitle = {Handbook of Game Theory with Economic Applications},
publisher = {Elsevier},
volume = {1},
address   = {~},
pages = {485-541},
year = {1992},
issn = {1574-0005},
doi = {10.1016/S1574-0005(05)80019-0},
author = {Alvin E. Roth and Marilda Sotomayor}
}

@Inbook{Barbera2004,
    author="Barber{\`a}, Salvador
    and Bossert, Walter
    and Pattanaik, Prasanta K.",
    title="Ranking Sets of Objects",
    bookTitle="Handbook of Utility Theory: Volume 2 Extensions",
    chapter="4",
    year="2004",
    publisher="Springer US",
    address="Boston, MA",
    pages="893--977",
    isbn="978-1-4020-7964-1",
    doi="10.1007/978-1-4020-7964-1_4",
}

@InProceedings{Moretti2017,
author="Moretti, Stefano
and {\"O}zt{\"u}rk, Meltem",
editor="Rothe, J{\"o}rg",
title="Some Axiomatic and Algorithmic Perspectives on the Social Ranking Problem",
booktitle="Algorithmic Decision Theory",
year="2017",
publisher="Springer International Publishing",
address="Cham",
pages="166--181",
isbn="978-3-319-67504-6"
}

@Article{Ford1994,
author={Ford, Robert C.
and Richardson, Woodrow D.},
title={Ethical decision making: A review of the empirical literature},
journal={Journal of Business Ethics},
year={1994},
month={Mar},
day={01},
volume={13},
number={3},
pages={205-221},
issn={1573-0697},
doi={10.1007/BF02074820},
}

@incollection{wittmer2019,
  title={Ethical decision-making},
  address   = {~},
  author={Wittmer, Dennis P},
  booktitle={Handbook of administrative ethics},
  pages={481--507},
  year={2019},
  publisher={Routledge}
}

@article{Walstrom2006,
author = {Kent A. Walstrom},
title = {Social and Legal Impacts on Information Ethics Decision Making},
journal = {Journal of Computer Information Systems},
volume = {47},
number = {2},
pages = {1-8},
year  = {2006},
publisher = {Taylor & Francis},
doi = {10.1080/08874417.2007.11645948},
}

@article{Kleijnen2001,
title = {Ethical issues in modeling: Some reflections},
journal = {European Journal of Operational Research},
volume = {130},
number = {1},
pages = {223-230},
year = {2001},
issn = {0377-2217},
doi = {10.1016/S0377-2217(00)00024-2},
author = {J.P.C. Kleijnen}
}

@book{MacAskill2020,
    author = {MacAskill, William and Bykvist, Krister and Ord, Toby},
    title = "{Moral Uncertainty}",
    publisher = {Oxford University Press},
    year = {2020},
    month = {09},
    address   = {~},
    isbn = {9780198722274},
    doi = {10.1093/oso/9780198722274.001.0001},
    eprint = {https://academic.oup.com/book/31934/book-pdf/49946263/9780191033636\_web.pdf},
}

@ARTICLE{Astobiza2021,
  author={Astobiza, An{\'i}bal Monasterio and Toboso, Mario and Aparicio, Manuel and L{\'o}pez, Daniel},
  journal={IEEE Technology and Society Magazine}, 
  title={AI Ethics for Sustainable Development Goals}, 
  year={2021},
  volume={40},
  number={2},
  pages={66-71},
  doi={10.1109/MTS.2021.3056294}
}

@article{Beckstead2023,
author = {Beckstead, Nick and Thomas, Teruji},
title = {A paradox for tiny probabilities and enormous values},
journal = {Noûs},
year={2023},
volume = {n/a},
number = {n/a},
pages = {},
doi = {10.1111/nous.12462},
eprint = {https://onlinelibrary.wiley.com/doi/pdf/10.1111/nous.12462}
}

@article{Christen2012,
author = {Christen, Marius and Schmidt, Stephan},
title = {A Formal Framework for Conceptions of Sustainability – a Theoretical Contribution to the Discourse in Sustainable Development},
journal = {Sustainable Development},
volume = {20},
number = {6},
pages = {400-410},
doi = {10.1002/sd.518},
eprint = {https://onlinelibrary.wiley.com/doi/pdf/10.1002/sd.518},
year = {2012}
}

@article{Antunes2006,
title = {Participatory decision making for sustainable development—the use of mediated modelling techniques},
journal = {Land Use Policy},
volume = {23},
number = {1},
pages = {44-52},
year = {2006},
note = {Resolving Environmental Conflicts:Combining Participation and Muli-Criteria Analysis},
issn = {0264-8377},
doi = {10.1016/j.landusepol.2004.08.014},
author = {Paula Antunes and Rui Santos and Nuno Videira}
}

@inproceedings{murukannaiah2020,
author = {Murukannaiah, Pradeep K. and Ajmeri, Nirav and Jonker, Catholijn M. and Singh, Munindar P.},
title = {New Foundations of Ethical Multiagent Systems},
year = {2020},
isbn = {9781450375184},
publisher = {International Foundation for Autonomous Agents and Multiagent Systems},
address = {Richland, SC},
booktitle = {Proceedings of the 19th International Conference on Autonomous Agents and MultiAgent Systems},
pages = {1706–1710},
numpages = {5},
location = {Auckland, New Zealand},
series = {AAMAS '20}
}

@article{Mashayekhi2022,
author = {Mashayekhi, Mehdi and Ajmeri, Nirav and List, George F. and Singh, Munindar P.},
title = {Prosocial Norm Emergence in Multi-agent Systems},
year = {2022},
issue_date = {June 2022},
publisher = {Association for Computing Machinery},
address = {New York, NY, USA},
volume = {17},
number = {1–2},
issn = {1556-4665},
url = {https://doi.org/10.1145/3540202},
doi = {10.1145/3540202},
journal = {ACM Trans. Auton. Adapt. Syst.},
month = sep,
articleno = {3},
numpages = {24}
}

@inproceedings{Lera2022,
author = {Lera-Leri, Roger and Bistaffa, Filippo and Serramia, Marc and Lopez-Sanchez, Maite and Rodriguez-Aguilar, Juan},
title = {Towards Pluralistic Value Alignment: Aggregating Value Systems Through $l_{p}$-Regression},
year = {2022},
isbn = {9781450392136},
publisher = {International Foundation for Autonomous Agents and Multiagent Systems},
address = {Richland, SC},
booktitle = {Proceedings of the 21st International Conference on Autonomous Agents and Multiagent Systems},
pages = {780–788},
numpages = {9},
location = {Virtual Event, New Zealand},
doi = {10.5555/3535850.3535938},
series = {AAMAS '22}
}

@inproceedings{Ajmeri2020,
author = {Ajmeri, Nirav and Guo, Hui and Murukannaiah, Pradeep K. and Singh, Munindar P.},
title = {Elessar: Ethics in Norm-Aware Agents},
year = {2020},
isbn = {9781450375184},
publisher = {International Foundation for Autonomous Agents and Multiagent Systems},
address = {Richland, SC},
booktitle = {Proceedings of the 19th International Conference on Autonomous Agents and MultiAgent Systems},
pages = {16–24},
numpages = {9},
location = {Auckland, New Zealand},
series = {AAMAS '20}
}

@inproceedings{Yurrita2022,
  title={Towards a multi-stakeholder value-based assessment framework for algorithmic systems},
  author={Yurrita, Mireia and Murray-Rust, Dave and Balayn, Agathe and Bozzon, Alessandro},
  booktitle={Proceedings of the 2022 ACM Conference on Fairness, Accountability, and Transparency},
  doi = {10.1145/3531146.3533118},
  pages={535--563},
  year={2022}
}

@inproceedings{Lopez2017,
author = {Lopez-Sanchez, Maite and Serramia, Marc and Rodriguez-Aguilar, Juan A. and Morales, Javier and Wooldridge, Michael},
title = {Automating Decision Making to Help Establish Norm-Based Regulations},
year = {2017},
publisher = {International Foundation for Autonomous Agents and Multiagent Systems},
address = {Richland, SC},
booktitle = {Proceedings of the 16th Conference on Autonomous Agents and MultiAgent Systems},
pages = {1613–1615},
numpages = {3},
location = {S\~{a}o Paulo, Brazil},
series = {AAMAS '17}
}

@inproceedings{Serramia2020,
author = {Serramia, Marc and Lopez-Sanchez, Maite and Rodriguez-Aguilar, Juan A.},
title = {A Qualitative Approach to Composing Value-Aligned Norm Systems},
year = {2020},
isbn = {9781450375184},
publisher = {International Foundation for Autonomous Agents and Multiagent Systems},
address = {Richland, SC},
booktitle = {Proceedings of the 19th International Conference on Autonomous Agents and MultiAgent Systems},
pages = {1233–1241},
numpages = {9},
location = {Auckland, New Zealand},
series = {AAMAS '20}
}

@inproceedings{Serramia2018,
author = {Serramia, Marc and Lopez-Sanchez, Maite and Rodriguez-Aguilar, Juan A. and Rodriguez, Manel and Wooldridge, Michael and Morales, Javier and Ansotegui, Carlos},
title = {Moral Values in Norm Decision Making},
year = {2018},
publisher = {International Foundation for Autonomous Agents and Multiagent Systems},
address = {Richland, SC},
booktitle = {Proceedings of the 17th International Conference on Autonomous Agents and MultiAgent Systems},
pages = {1294–1302},
numpages = {9},
location = {Stockholm, Sweden},
series = {AAMAS '18}
}

@article{Serramia2023,
title = {Building rankings encompassing multiple criteria to support qualitative decision-making},
journal = {Information Sciences},
volume = {631},
pages = {288-304},
year = {2023},
issn = {0020-0255},
doi = {10.1016/j.ins.2023.02.063},
author = {Marc Serramia and Maite Lopez-Sanchez and Stefano Moretti and Juan A. Rodriguez-Aguilar}
}

@inproceedings{Serramia2024,
author = {Serramia, Marc and Lopez-Sanchez, Maite and Rodriguez-Aguilar, Juan A. and Moretti, Stefano},
title = {Value Alignment in Participatory Budgeting},
year = {2024},
isbn = {9798400704864},
publisher = {International Foundation for Autonomous Agents and Multiagent Systems},
address = {Richland, SC},
booktitle = {Proceedings of the 23rd International Conference on Autonomous Agents and Multiagent Systems},
pages = {1692–1700},
numpages = {9},
location = {Auckland, New Zealand},
series = {AAMAS '24}
}

@article{LeraLeri2024,
title = {Aggregating value systems for decision support},
journal = {Knowledge-Based Systems},
volume = {287},
pages = {111453},
year = {2024},
issn = {0950-7051},
doi = {https://doi.org/10.1016/j.knosys.2024.111453},
url = {https://www.sciencedirect.com/science/article/pii/S0950705124000881},
author = {Roger X. Lera-Leri and Enrico Liscio and Filippo Bistaffa and Catholijn M. Jonker and Maite Lopez-Sanchez and Pradeep K. Murukannaiah and Juan A. Rodriguez-Aguilar and Francisco Salas-Molina}
}

@book{bullen2013,
  title={Handbook of means and their inequalities},
  author={Bullen, Peter S},
  volume={560},
  year={2013},
  doi={0.1007/978-94-017-0399-4},
  publisher={Springer Science \& Business Media},
}

@phdthesis{mitesis2023,
  title={Evaluation of optimal solutions in multicriteria models for intelligent decision support},
  author={Aar{\'{o}}n L{\'{o}}pez-Garc{\'{i}}a},
  year={2023},
  school={Universitat de Val{\`e}ncia},
  note={Published on \url{https://roderic.uv.es/handle/10550/88785}},
}

@misc{IEEE_AS,
  author = {{IEEE Standards Association}},
  title = {Autonomous Systems},
  year = {2023},
  url = {https://standards.ieee.org/industry-connections/ec/autonomous-systems.html},
  urldate = {2024-07-10},
  organization = {IEEE},
  note = {IEEE Industry Connections: Autonomous Systems Initiative}
}

@misc{EU_AI_Guidelines,
  author = {{European Commission}},
  title = {Ethics Guidelines for Trustworthy {AI}},
  year = {2019},
  url = {https://digital-strategy.ec.europa.eu/en/library/ethics-guidelines-trustworthy-ai},
  urldate = {2024-07-10}, 
  organization = {European Commission},
  note = {Part of the EU's Digital Strategy}
}

@Article{LoPiano2020,
author={Lo Piano, Samuele},
title={Ethical principles in machine learning and artificial intelligence: cases from the field and possible ways forward},
journal={Humanities and Social Sciences Communications},
year={2020},
month={Jun},
day={17},
volume={7},
number={1},
pages={9},
issn={2662-9992},
doi={10.1057/s41599-020-0501-9},
url={https://doi.org/10.1057/s41599-020-0501-9}
}

\end{document}